\newcommand{\score}{\texttt{score}}
\newcommand{\tree}{\cT}
\newcommand{\lf}{\left\lfloor}
\newcommand{\rf}{\right\rfloor}
\newcommand{\LP}{\mathsf{LP}}
\title{Sample Complexity of Tree Search Configuration:\\ Cutting Planes and Beyond}
\author{Maria-Florina Balcan\thanks{Computer Science Department, Machine Learning Department, Carnegie Mellon University. \texttt{ninamf@cs.cmu.edu}} \and Siddharth Prasad\thanks{Computer Science Department, Carnegie Mellon University. \texttt{sprasad2@cs.cmu.edu}} \and Tuomas Sandholm\thanks{Computer Science Department, Carnegie Mellon University, Optimized Markets, Inc., Strategic Machine, Inc., Strategy Robot, Inc. \texttt{sandholm@cs.cmu.edu}} \and Ellen Vitercik\thanks{Computer Science Department, Carnegie Mellon University. \texttt{vitercik@cs.cmu.edu}}}
\begin{document}

\maketitle

\begin{abstract}
    Cutting-plane methods have enabled remarkable successes in integer programming over the last few decades. State-of-the-art solvers integrate a myriad of cutting-plane techniques to speed up the underlying tree-search algorithm used to find optimal solutions. In this paper we prove the first guarantees for learning high-performing cut-selection policies tailored to the instance distribution at hand using samples. We first bound the sample complexity of learning cutting planes from the canonical family of Chv\'{a}tal-Gomory cuts. Our bounds handle any number of waves of any number of cuts and are fine tuned to the magnitudes of the constraint coefficients. Next, we prove sample complexity bounds for more sophisticated cut selection policies that use a combination of scoring rules to choose from a family of cuts. Finally, beyond the realm of cutting planes for integer programming, we develop a general abstraction of tree search that captures key components such as node selection and variable selection. For this abstraction, we bound the sample complexity of learning a good policy for building the search tree.
\end{abstract}

\section{Introduction}

Integer programming is one of the most broadly-applicable tools in computer science, used to formulate problems from operations research (such as routing, scheduling, and pricing), machine learning (such as adversarially-robust learning, MAP estimation, and clustering), and beyond. \emph{Branch-and-cut (B\&C)} is the most widely-used algorithm for solving integer programs (IPs). B\&C is highly configurable, and with a deft configuration, it can be used to solve computationally challenging problems. Finding a good configuration, however, is a notoriously difficult problem.

We study machine learning approaches to configuring policies for selecting \emph{cutting planes}, which have an enormous impact on B\&C's performance~\cite{Nemhauser99:Integer,Balas96:Mixed,Chvatal73:Edmonds,Gomory58:Outline,Cornuejols01:Elementary}. At a high level, B\&C works by recursively partitioning the IP's feasible region, searching for the locally optimal solution within each set of the partition, until it can verify that it has found the globally optimal solution. An IP's feasible region is defined by a set of linear inequalities $A \vec{x} \leq \vec{b}$ and integer constraints $\vec{x} \in \Z^n$, where $n$ is the number of variables. By dropping the integrality constraints, we obtain the \emph{linear programming (LP) relaxation} of the IP, which can be solved efficiently. A cutting plane is a carefully-chosen linear inequality $\vec{\alpha}^T\vec{x} \leq \beta$ which refines the LP relaxation's feasible region without separating any integral point. Intuitively, a well-chosen cutting plane will remove a large portion of the LP relaxation's feasible region, speeding up the time it takes B\&C to find the optimal solution to the original IP.
Cutting plane selection is a crucial task, yet it is challenging because many cutting planes and cut-selection policies have tunable parameters, and the best configuration depends intimately on the application domain.

We provide the first provable guarantees for learning high-performing cutting planes and cut-selection policies, tailored to the application at hand. We model the application domain via an unknown, application-specific distribution over IPs, as is standard in the literature on using machine learning for integer programming~\cite[e.g.,][]{Sandholm13:Very-Large-Scale,Hutter09:Paramils,Leyton-Brown09:Empirical,Kadioglu10:ISAC,Xu11:Hydra-MIP}. For example, this could be a distribution over the routing IPs that a shipping company must solve day after day. The learning algorithm's input is a training set sampled from this distribution. The goal is to use this training set to learn cutting planes and cut-selection policies with strong future performance on problems from the same application but which are not already in the training set---or more formally, strong expected performance.

\subsection{Summary of main contributions and overview of techniques}

As our first main contribution, we bound the \emph{sample complexity} of learning high-performing cutting planes. Fixing a family of cutting planes, these guarantees bound the number of samples sufficient to ensure that for any sequence of cutting planes from the family, its average performance over the samples is close to its expected performance. We measure performance in terms of the size of the search tree B\&C builds. Our guarantees apply to the parameterized family of \emph{Chv\'{a}tal-Gomory (CG) cuts}~\citep{Chvatal73:Edmonds,Gomory58:Outline}, one of the most widely-used families of cutting planes.

The overriding challenge is that to provide guarantees, we must analyze how the tree size changes as a function of the cut parameters. This is a sensitive function: slightly shifting the parameters can cause the tree size to shift from constant to exponential in the number of variables.
Our key technical insight is that as the parameters vary, the entries of the cut (i.e., the vector $\vec{\alpha}$ and offset $\beta$ of the cut $\vec{\alpha}^T\vec{x} \leq \beta$) are multivariate polynomials of bounded degree. The number of terms defining the polynomials is exponential in the number of parameters, but we show that the polynomials can be embedded in a space with dimension sublinear in the number of parameters. This insight allows us to better understand tree size as a function of the parameters. We then leverage results by \citet{Balcan21:How} that show how to use structure exhibited by dual functions (measuring an algorithm's performance, such as its tree size, as a function of its parameters) to derive sample complexity bounds.

Our second main contribution is a sample complexity bound for learning cut-selection policies, which allow B\&C to adaptively select cuts as it solves the input IP. These cut-selection policies assign a number of real-valued scores to a set of cutting planes and then apply the cut that has the maximum weighted sum of scores.
\begin{figure}

     \centering
     \begin{subfigure}[b]{0.47\textwidth}
        \captionsetup{justification=centering}
         \centering
         \includegraphics[scale=0.48]{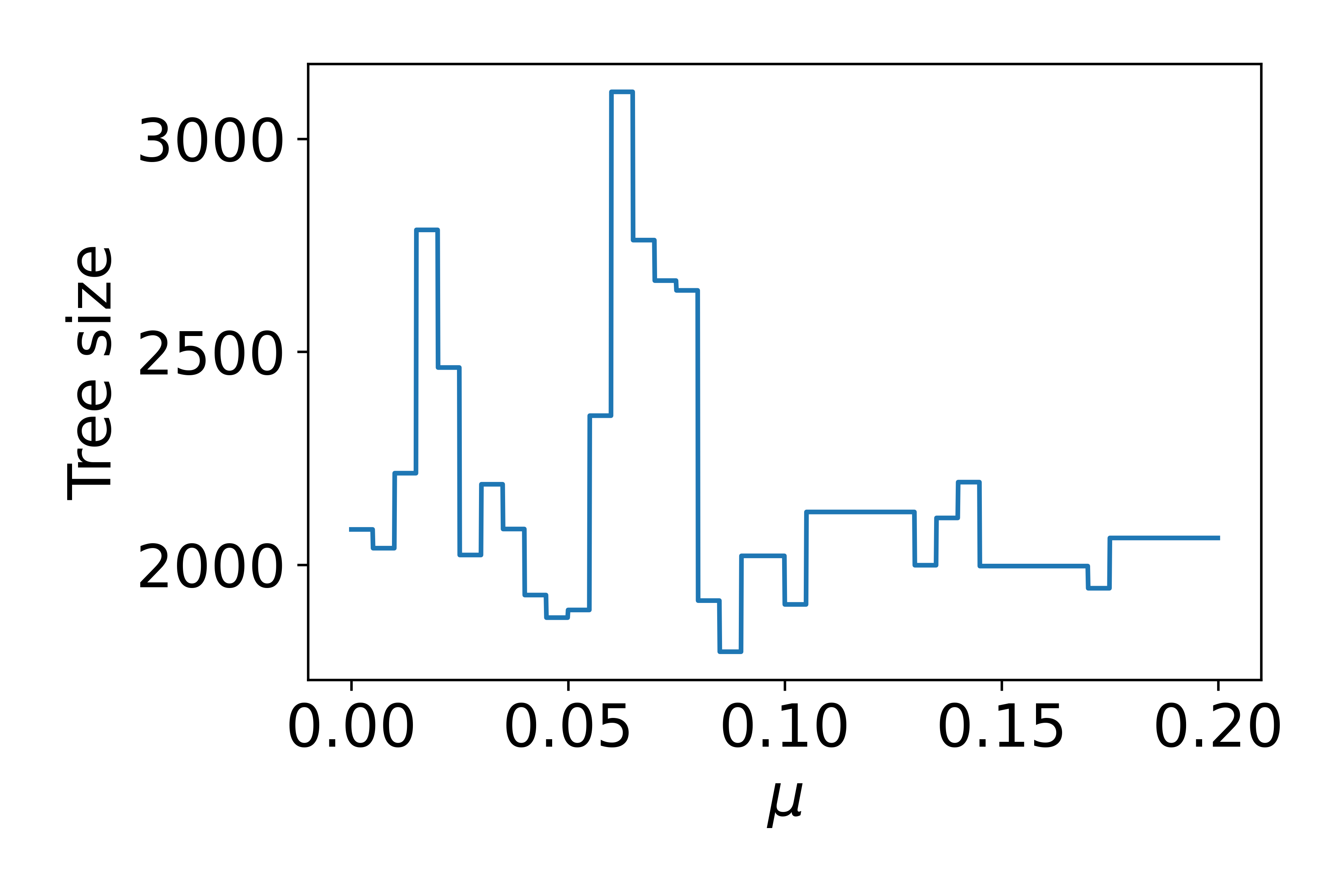}
     \end{subfigure}
     \hfill
     \begin{subfigure}[b]{0.47\textwidth}
        \captionsetup{justification=centering}
         \centering
         \includegraphics[scale=0.48]{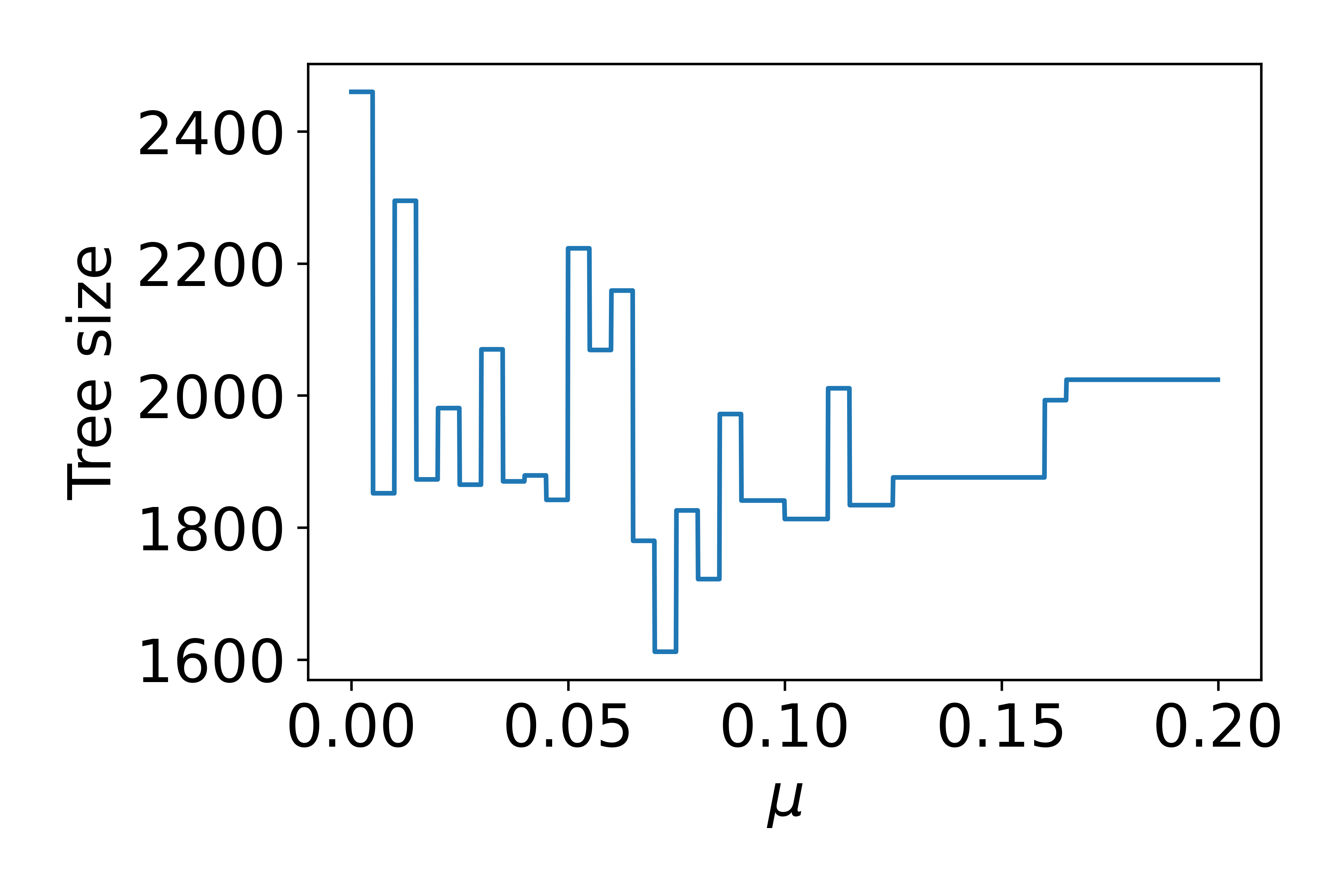}
     \end{subfigure}
    \caption{Two examples of tree size as a function of a SCIP cut-selection parameter $\mu$ (the directed cutoff distance weight, defined in Section~\ref{sec:formulation}) on IPs generated from the Combinatorial Auctions Test Suite~\citep{Leyton00:Toward} (the ``regions'' generator with 600 bids and 600 goods). SCIP~\citep{Gamrath20:SCIP} is the leading open-source IP solver.}
    \label{fig:dual}
\end{figure}
Tree size is a volatile function of these weights, though we prove that it is piecewise constant, as illustrated in Figure~\ref{fig:dual}, which allows us to prove our sample complexity bound.

Finally, as our third main contribution, we provide guarantees 
for tuning weighted combinations of scoring rules for other aspects of tree search beyond cut selection, including node and variable selection.
We prove that there is a set of hyperplanes splitting the parameter space into regions such that if tree search uses any configuration from a single region, it will take the same sequence of actions. This structure allows us to prove our sample complexity bound. This is the first paper to provide guarantees for tree search configuration that apply simultaneously to multiple different aspects of the algorithm---prior research was specific to variable selection~\citep{Balcan18:Learning}.

\subsection{Related work}

\paragraph{Applied research on tree search configuration.} Over the past decade, a substantial literature has developed on the use of machine learning for integer programming and tree search~\citep[e.g.,][]{Sandholm13:Very-Large-Scale,Xu08:Satzilla,Leyton-Brown09:Empirical,Kadioglu10:ISAC,Xu11:Hydra-MIP,Khalil16:Learning,Alvarez17:Machine,DiLiberto16:Dash,Xia18:Learning, Liang16L:earning,Lagoudakis01:Learning,Sabharwal12:Guiding,He14:Learning,Khalil17:Learning,Balcan20:Refined,Lodi17:Learning,Bengio20:Machine}. This has included research that improves specific aspects of B\&C such as variable selection~\citep{Khalil16:Learning,Alvarez17:Machine,DiLiberto16:Dash,Xia18:Learning, Liang16L:earning,Lagoudakis01:Learning}, node selection~\citep{Sabharwal12:Guiding,He14:Learning}, and heuristic scheduling~\citep{Khalil17:Learning}. These papers are applied, whereas we focus on providing theoretical guarantees.

With respect to cutting plane selection, the focus of this paper, \citet{Sandholm13:Very-Large-Scale} uses machine learning techniques to customize B\&C for combinatorial auction winner determination, including cutting plane selection.
\citet{Tang20:Reinforcement} study machine learning approaches to cutting plane selection. They formulate this problem as a reinforcement learning problem and show that their approach can outperform human-designed heuristics for a variety of tasks. Meanwhile, the focus of our paper is to provide the first provable guarantees for cutting plane selection via machine learning.

\citet{Ferber20:Mipaal} study a problem where the IP objective vector $\vec{c}$ is unknown, but an estimate $\hat{\vec{c}}$ can be obtained from data. Their goal is to optimize the quality of the solutions obtained by solving the IP defined by $\hat{\vec{c}}$. They do so by formulating the IP as a differentiable layer in a neural network. The nonconvex nature of the IP does not allow for straightforward gradient computations, so they obtain a continuous surrogate using cutting planes.

\paragraph{Provable guarantees for algorithm configuration.} \citet{Gupta17:PAC} initiated the study of sample complexity bounds for algorithm configuration. A chapter by \citet{Balcan20:Data} provides a comprehensive survey. In research most related to ours, \citet{Balcan18:Learning} provide sample complexity bounds for learning tree search \emph{variable selection policies (VSPs)}. They prove their bounds by showing that for any IP, hyperplanes partition the VSP parameter space into regions where the B\&C tree size is a constant function of the parameters. The analysis in this paper requires new techniques because although we prove that the B\&C tree size is a piecewise-constant function of the CG cutting plane parameters, the boundaries between pieces are far more complex than hyperplanes: they are hypersurfaces defined by multivariate polynomials.

\citet{Kleinberg17:Efficiency,Kleinberg19:Procrastinating} and \citet{Weisz18:LeapsAndBounds,Weisz19:CapsAndRuns}
design configuration procedures for runtime minimization that come with theoretical guarantees.
Their algorithms are designed for the case where there are finitely-many parameter settings to choose from (although they are still able to provide guarantees for infinite parameter spaces by running their procedure on a finite sample of configurations; \citet{Balcan18:Learning,Balcan20:Learning} analyze when discretization approaches can and cannot be gainfully employed). In contrast, our guarantees are designed for infinite parameter spaces.

\section{Problem formulation}\label{sec:formulation}

In this section we give a more detailed technical overview of branch-and-cut, as well as an overview of the tools from learning theory we use to prove sample complexity guarantees.

\subsection{Branch-and-cut}\label{sec:BandC}
We study integer programs (IPs) in canonical form given by \begin{equation}\max\left\{\vec{c}^T\vec{x}:A\vec{x}\le\vec{b},\vec{x}\ge 0, \vec{x}\in\Z^n\right\},\label{eq:IP}\end{equation} where $A\in\Z^{m\times n}$, $\vec{b}\in\Z^m$, and $\vec{c} \in \R^n$.
Branch-and-cut (B\&C) works by recursively partitioning the input IP's feasible region, searching for the locally optimal solution within each set of the partition until it can verify that it has found the globally optimal solution. It organizes this partition as a search tree, with the input IP stored at the root. It begins by solving the LP relaxation of the input IP; we denote the solution as $\vec{x}^*_{\LP} \in \R^n$.  If $\vec{x}^*_{\LP}$ satisfies the IP's integrality constraints $\left(\vec{x}^*_{\LP}\in\Z^n\right)$, then the procedure terminates---$\vec{x}^*_{\LP}$ is the globally optimal solution. Otherwise, it uses a \emph{variable selection policy} to choose a variable $x[i]$. In the left child of the root, it stores the original IP with the additional constraint that $x[i] \leq \lfloor x^*_{\LP}[i]\rfloor$, and in the right child, with the additional constraint that $x[i] \geq \lceil x^*_{\LP}[i]\rceil$. It then uses a \emph{node selection policy} to select a leaf of the tree and repeats this procedure---solving the LP relaxation and branching on a variable. B\&C can \emph{fathom} a node, meaning that it will stop searching along that branch, if 1) the LP relaxation satisfies the IP's integrality constraints, 2) the LP relaxation is infeasible, or 3) the objective value of the LP relaxation's solution is no better than the best integral solution found thus far. We assume there is a bound $\kappa$ on the size of the tree we allow B\&C to build before we terminate, as is common in prior research~\citep{Hutter09:Paramils,Kleinberg17:Efficiency,Kleinberg19:Procrastinating,Balcan18:Learning}.

Cutting planes are a means of ensuring that at each iteration of B\&C, the solution to the LP relaxation is as close to the optimal integral solution as possible. Formally,
let $$\cP = \{\vec{x}\in\R^n : A\vec{x}\le\vec{b}, \vec{x}\ge 0\}$$ denote the feasible region obtained by taking the LP relaxation of IP~\eqref{eq:IP}. Let $\cP_I = \operatorname{conv}(\cP\cap\Z^n)$ denote the integer hull of $\cP$. A \emph{valid} cutting plane is any hyperplane $\vec{\alpha}^T\vec{x}\le\beta$ such that if $\vec{x}$ is in the integer hull $\left(\vec{x}\in\cP_I\right)$, then $\vec{x}$ satisfies the inequality $\vec{\alpha}^T\vec{x}\le\beta$. In other words, a valid cut does not remove any integral point from the LP relaxation's feasible region. A valid cutting plane \emph{separates} $\vec{x}\in\cP\setminus\cP_I$ if it does not satisfy the inequality, or in other words, $\vec{\alpha}^T\vec{x}>\beta$. At any node of the search tree, B\&C can add valid cutting planes that separate the optimal solution to the node's LP relaxation, thus improving the solution estimates used to prune the search tree. However, adding too many cuts will increase the time it takes to solve the LP relaxation at each node. Therefore, solvers such as SCIP~\citep{Gamrath20:SCIP}, the leading open-source solver, bound the number of cuts that will be applied.

A famous class of cutting planes is the family of \emph{Chv\'{a}tal-Gomory (CG) cuts}\footnote{The set of CG cuts is equivalent to the set of Gomory (fractional) cuts~\citep{Cornuejols01:Elementary}, another commonly studied family of cutting planes with a slightly different parameterization.}~\citep{Chvatal73:Edmonds,Gomory58:Outline}, which are parameterized by vectors $\vec{u}\in\R^m$. The CG cut defined by $\vec{u}\in\R^m$ is the hyperplane $$\lf\vec{u}^TA\rf\vec{x}\le\lf\vec{u}^T\vec{b}\rf,$$ which is guaranteed to be valid. Throughout this paper we primarily restrict our attention to $\vec{u}\in[0,1)^m$. This is without loss of generality, since the facets of $$\cP\cap\{\vec{x}\in\R^n: \lfloor\vec{u}^TA\rfloor\vec{x}\le\lfloor\vec{u}^T\vec{b}\rfloor\,\forall\vec{u}\in\R^m\}$$ can be described by the finitely many $\vec{u}\in[0,1)^m$ such that $\vec{u}^TA\in\Z^n$~\citep{Chvatal73:Edmonds}.

\begin{figure}
     \centering
     \begin{subfigure}[b]{0.18\textwidth}
        \captionsetup{justification=centering}
         \centering
         \includegraphics[width=.8\textwidth]{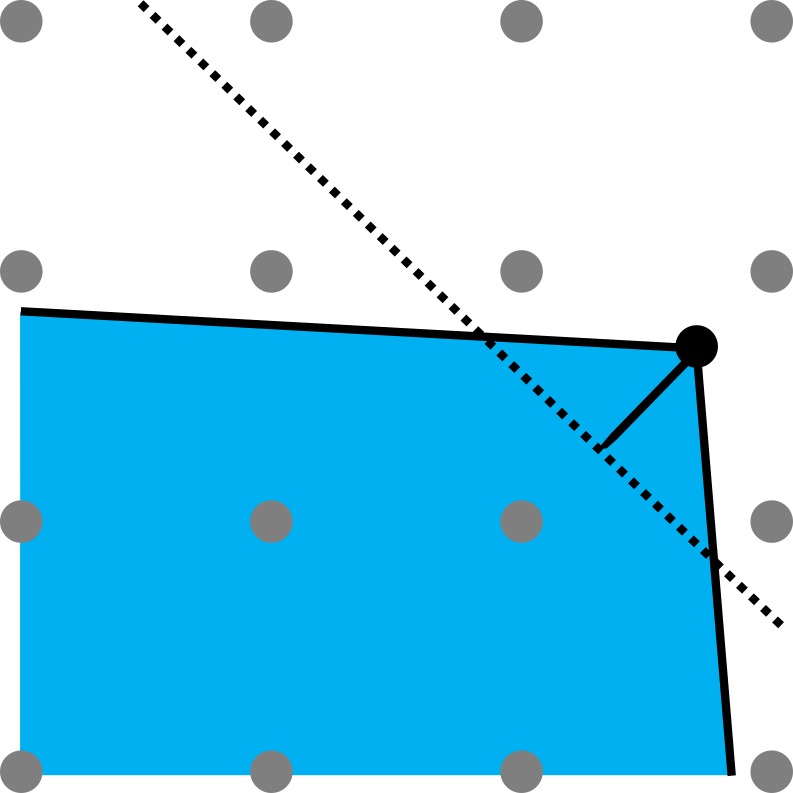}
         \caption{{\footnotesize Efficacy \newline}}
         \label{fig:efficacy}
     \end{subfigure}
     \hfill
     \begin{subfigure}[b]{0.18\textwidth}
        \captionsetup{justification=centering}
         \centering
         \includegraphics[width=.8\textwidth]{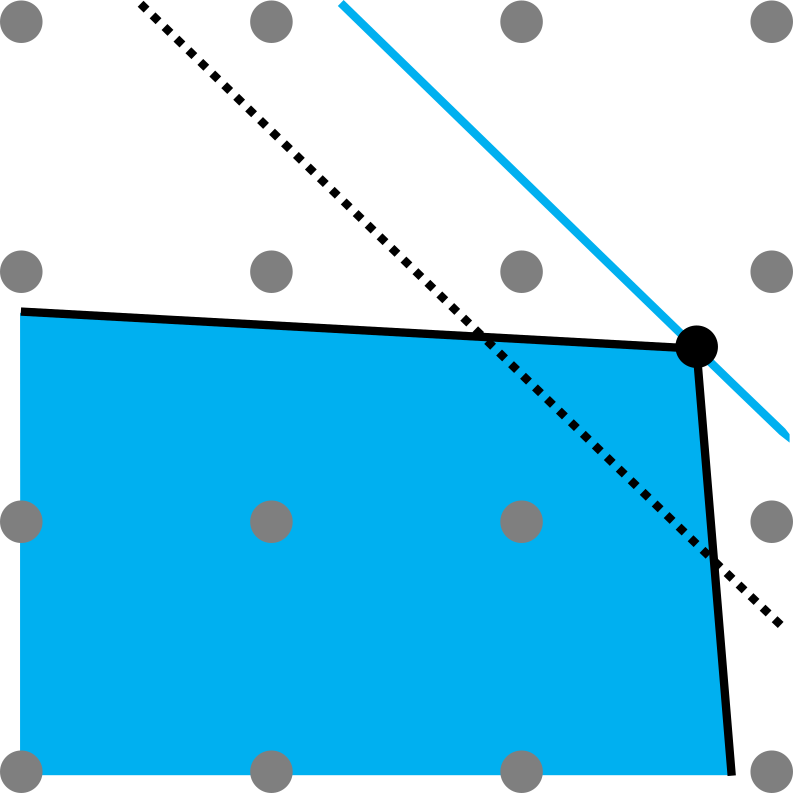}
         \caption{{\footnotesize Better objective parallelism}}
         \label{fig:parallel_good}
     \end{subfigure}
     \hfill
     \begin{subfigure}[b]{0.18\textwidth}
        \captionsetup{justification=centering}
         \centering
         \includegraphics[width=.8\textwidth]{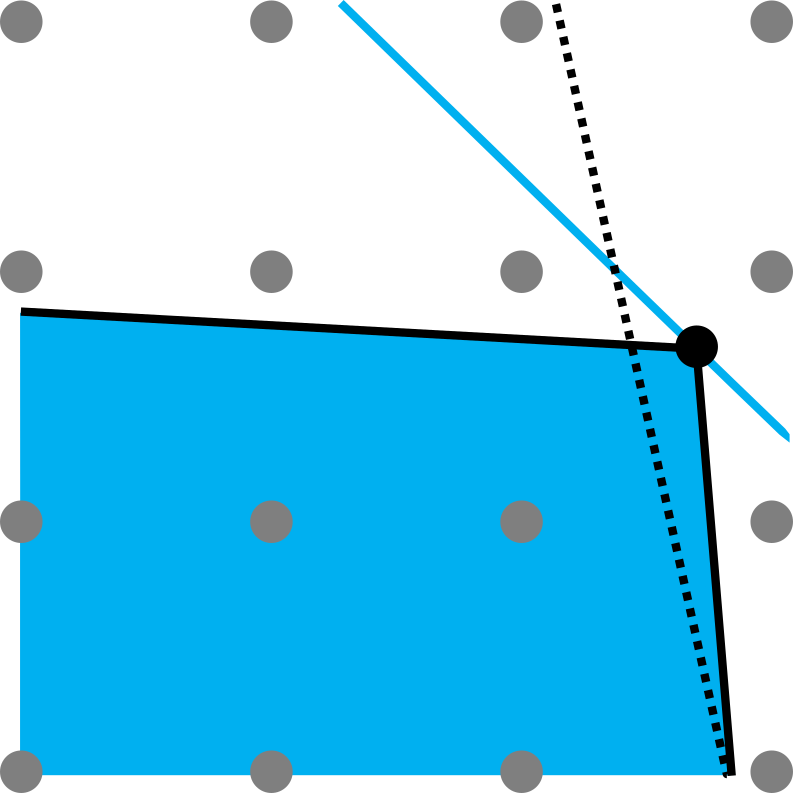}
         \caption{{\footnotesize Worse objective parallelism}}
         \label{fig:parallel_bad}
     \end{subfigure}
     \hfill
     \begin{subfigure}[b]{0.18\textwidth}
        \captionsetup{justification=centering}
         \centering
         \includegraphics[width=.8\textwidth]{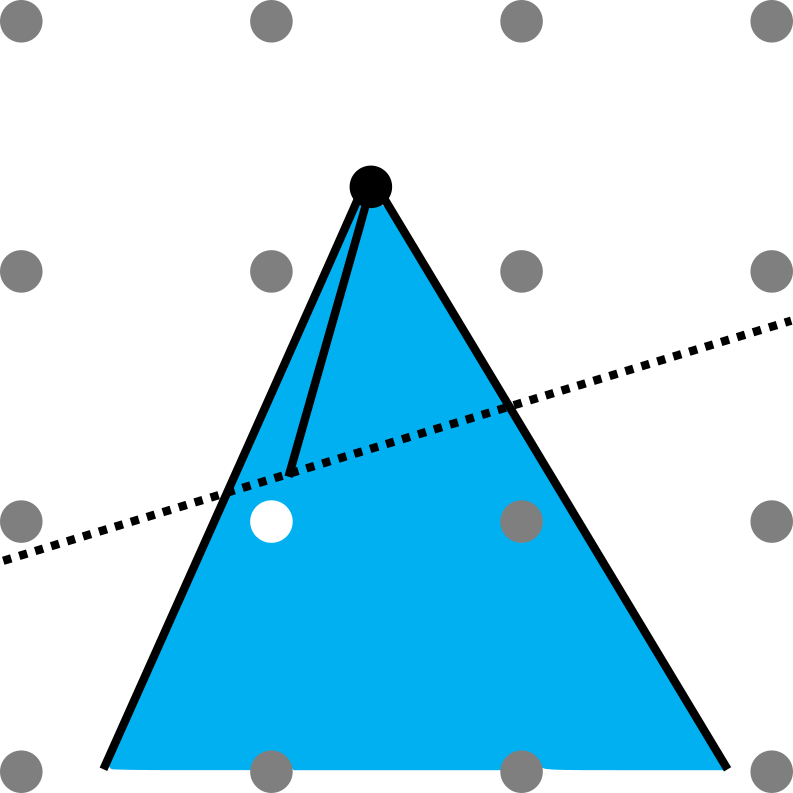}
         \caption{{\footnotesize Better directed cutoff distance}}
         \label{fig:directed_good}
     \end{subfigure}
     \hfill
     \begin{subfigure}[b]{0.18\textwidth}
     \captionsetup{justification=centering}
         \centering
         \includegraphics[width=.8\textwidth]{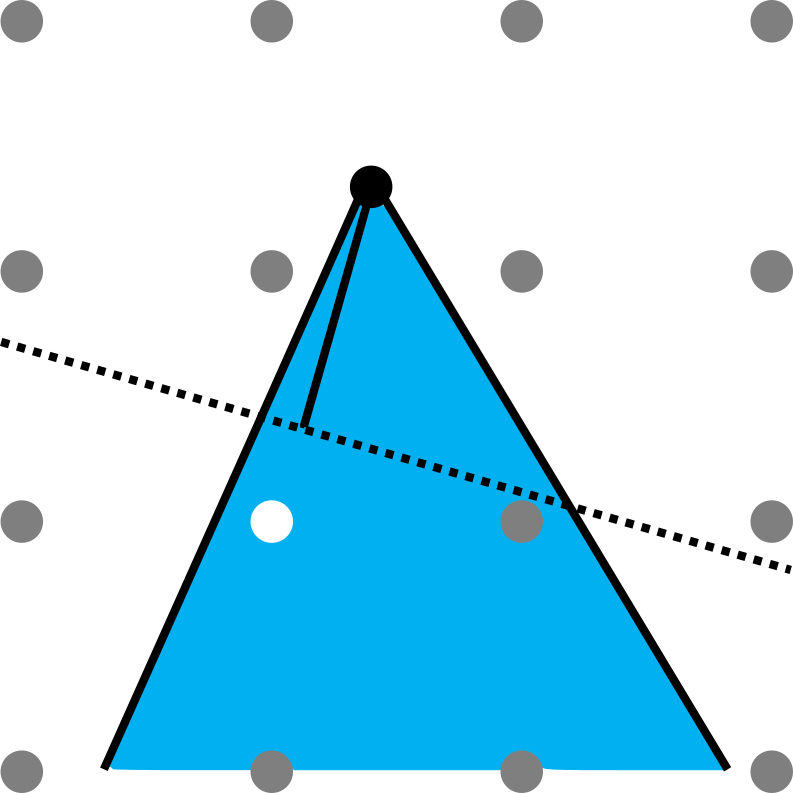}
         \caption{{\footnotesize Worse directed cutoff distance}}
         \label{fig:directed_bad}
     \end{subfigure}
        \caption{Illustration of scoring rules. In each figure, the blue region is the feasible region, the black dotted line is the cut in question, the blue solid line is orthogonal to the objective $\vec{c}$, the black dot is the LP optimal solution, and the white dot is the incumbent IP solution. Figure~\ref{fig:efficacy} illustrates efficacy, which is the length of the black solid line between the cut and the LP optimal solution. The cut in Figure~\ref{fig:parallel_good} has better objective parallelism than the cut in Figure~\ref{fig:parallel_bad}. The cut in Figure~\ref{fig:directed_good} has a better directed cutoff distance than the cut in Figure~\ref{fig:directed_bad}, but both have the same efficacy.}
        \label{fig:scoring_rules}
\end{figure}

Some IP solvers such as SCIP use \emph{scoring rules} to select among cutting planes, which are meant to measure the quality of a cut. Some commonly-used scoring rules include \emph{efficacy}~\citep{Balas96:Mixed} ($\score_1$), \emph{objective parallelism}~\citep{Achterberg07:Constraint} ($\score_2$), \emph{directed cutoff distance}~\citep{Gamrath20:SCIP} ($\score_3$), and \emph{integral support}~\citep{Wesselmann12:Implementing} ($\score_4$) (defined in Appendix~\ref{app:background}). \emph{Efficacy} measures the distance between the cut $\vec{\alpha}^T\vec{x}\le\beta$ and $\vec{x}^*_{\LP}$: $$\score_1(\vec{\alpha}^T\vec{x}\le\beta) = \frac{\vec{\alpha}^T\vec{x}^*_{\LP} - \beta}{\norm{\vec{\alpha}}_2},$$ as illustrated in Figure~\ref{fig:efficacy}. \emph{Objective parallelism} measures the angle between the objective $\vec{c}$ and the cut's normal vector $\vec{\alpha}$: $$\score_2(\vec{\alpha}^T\vec{x}\le\beta) = \frac{\left|\vec{c}^T\vec{\alpha}\right|}{\norm{\vec{\alpha}}_2 \norm{\vec{c}}_2},$$ as illustrated in Figures~\ref{fig:parallel_good} and \ref{fig:parallel_bad}. \emph{Directed cutoff distance} measures the distance between the LP optimal solution and the cut in a more relevant direction than the efficacy scoring rule. Specifically, let $\overline{\vec{x}}$ be the \emph{incumbent solution}, which is the best-known feasible solution to the input IP. The directed cutoff distance is the distance between the hyperplane $\left(\vec{\alpha}, \beta\right)$ and the current LP solution $\vec{x}^*_{\LP}$ along the direction of the incumbent $\overline{\vec{x}}$, as illustrated in Figures~\ref{fig:directed_good} and \ref{fig:directed_bad}: $$\score_3(\vec{\alpha}^T\vec{x}\le\beta) =  \frac{\vec{\alpha}^T \vec{x}^*_{\LP} - \beta}{\left|\vec{\alpha}^T \left(\overline{\vec{x}} - \vec{x}^*_{\LP}\right)\right|}\cdot\norm{\overline{\vec{x}} - \vec{x}^*_{\LP}}_2.$$  SCIP~\citep{Gamrath20:SCIP} uses the scoring rule $$\frac{3}{5}\score_1 + \frac{1}{10}\score_2 + \frac{1}{2}\score_3 + \frac{1}{10} \score_4.$$

\subsection{Learning theory background}\label{sec:LTbackground}

The goal of this paper is to learn cut-selection policies using samples in order to guarantee, with high probability, that B\&C builds a small tree in expectation on unseen IPs. To this end, we rely on the notion of \emph{pseudo-dimension}~\citep{Pollard84:Convergence}, a well-known measure of a function class's \emph{intrinsic complexity}. The pseudo-dimension of a function class $\cF\subseteq\R^{\cY}$, denoted $\pdim(\cF)$, is the largest integer $N$ for which there exist $N$ inputs $y_1,\ldots, y_N \in \cY$ and $N$ thresholds $r_1,\ldots, r_N \in \R$ such that for every $(\sigma_1,\ldots,\sigma_N)\in\{0,1\}^N$, there exists $f\in\cF$ such that $f(y_i)\ge r_i$ if and only if $\sigma_i=1$. Function classes with bounded pseudo-dimension satisfy the following uniform convergence guarantee~\cite{Anthony09:Neural,Pollard84:Convergence}.
Let $[-\kappa, \kappa]$ be the range of the functions in $\cF$, let $$N_{\cF}(\varepsilon,\delta) = O\left(\frac{\kappa^2}{\varepsilon^2}\left(\pdim(\cF) + \ln\left(\frac{1}{\delta}\right)\right)\right),$$ and let $N\ge N_{\cF}(\varepsilon,\delta)$. For all distributions $\dist$ on $\cY$, with probability $1-\delta$ over the draw of $y_1,\ldots, y_N\sim\dist$, for every function $f \in \cF$, the average value of $f$ over the samples is within $\varepsilon$ of its expected value: $$\left|\frac{1}{N}\sum_{i=1}^Nf(y_i)-\E_{y\sim\dist}[f(y)]\right|\le\varepsilon.$$ The quantity $N_{\cF}(\varepsilon,\delta)$ is the \emph{sample complexity} of $\cF$.

\section{Learning Chv\'{a}tal-Gomory cuts}\label{sec:cg_cuts}

In this section we bound the sample complexity of learning CG cuts at the root node of the B\&C search tree. We warm up by analyzing the case where a single CG cut is added at the root (Section~\ref{sec:single}), and then build on this analysis to handle $W$ sequential \emph{waves} of $k$ simultaneous CG cuts (Section~\ref{sec:waves_simultaneous}). This means that all $k$ cuts in the first wave are added simultaneously, the new (larger) LP relaxation is solved, all $k$ cuts in the second wave are added to the new problem simultaneously, and so on. B\&C adds cuts in waves because otherwise the angles between cuts would become obtuse, leading to numerical instability. Moreover, many commercial IP solvers only add cuts at the root because those cuts can be leveraged throughout the tree. However, in Section~\ref{sec:tree_search}, we also provide guarantees for applying cuts throughout the tree. In this section, we assume that all aspects of B\&C (such as node selection and variable selection) are fixed except for the cuts applied at the root of the search tree. 

\subsection{Learning a single cut}\label{sec:single}
To provide sample complexity bounds, as per Section~\ref{sec:LTbackground},
we bound the pseudo-dimension of the set of functions $f_{\vec{u}}$ for $\vec{u} \in [0,1]^m$, where $f_{\vec{u}}(\vec{c}, A, \vec{b})$ is the size of the tree B\&C builds when it applies the CG cut defined by $\vec{u}$ at the root.
To do so, we take advantage of structure exhibited by the class of \emph{dual} functions, each of which is defined by a fixed IP $(\vec{c}, A, \vec{b})$ and measures tree size as a function of the parameters $\vec{u}$. In other words, each dual function $f_{\vec{c}, A, \vec{b}}^*: [0,1]^m \to \R$ is defined as $f_{\vec{c}, A, \vec{b}}^*(\vec{u}) = f_{\vec{u}}(\vec{c}, A, \vec{b})$. Our main result in this section is a proof that the dual functions are well-structured (Lemma~\ref{lem:1plane}), which then allows us to apply a result by \citet{Balcan21:How} to bound $\pdim(\{f_{\vec{u}} : \vec{u} \in [0,1]^m\})$ (Theorem~\ref{thm:1plane}). Proving that the dual functions are well-structured is challenging because they are volatile: slightly perturbing $\vec{u}$ can cause the tree size to shift from constant to exponential in $n$, as we prove in the following theorem. The full proof is in Appendix~\ref{app:cg_cuts}.
\begin{theorem}\label{thm:sensitive}
    For any integer $n$, there exists an integer program $(\vec{c}, A, \vec{b})$ with two constraints and $n$ variables such that if $\frac{1}{2} \leq u[1] - u[2] < \frac{n+1}{2n}$, then applying the CG cut defined by $\vec{u}$ at the root causes B\&C to terminate immediately. Meanwhile, if $\frac{n+1}{2n} \leq u[1] - u[2] < 1$, then applying the CG cut defined by $\vec{u}$ at the root causes B\&C to build a tree of size at least $2^{(n-1)/2}.$
\end{theorem}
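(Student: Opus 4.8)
The plan is to exhibit a parity gadget and show that a single CG cut toggles, at the stated threshold, between rendering the root LP infeasible and leaving it untouched. Take $n$ odd and encode the equality $2(x_1+\cdots+x_n)=n$ with the two inequalities $2\sum_i x_i\le n$ and $-2\sum_i x_i\le -n$, so that $A$ has rows $(2,\dots,2)$ and $(-2,\dots,-2)$ and $\vec b=(n,-n)^T$; the objective $\vec c$ may be arbitrary. For $\vec u=(u[1],u[2])$ and $t:=u[1]-u[2]\in[\tfrac12,1)$ we get $\vec u^TA=(2t,\dots,2t)$ and $\vec u^T\vec b=tn$, so $\lfloor\vec u^TA\rfloor=(1,\dots,1)$ (because $2t\in[1,2)$) and the induced cut is simply $\sum_i x_i\le\lfloor tn\rfloor$. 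Thus $\vec u$ enters only through the single integer $\lfloor tn\rfloor$, which is exactly the quantity whose value changes at $\tfrac{n+1}{2n}$.

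Next I would read off the two regimes from where $\lfloor tn\rfloor$ jumps. For $t\in[\tfrac12,\tfrac{n+1}{2n})$ we have $tn\in[\tfrac n2,\tfrac{n+1}2)$, hence $\lfloor tn\rfloor=\tfrac{n-1}2$, and the cut $\sum_i x_i\le\tfrac{n-1}2$ together with the relaxation's equality $\sum_i x_i=\tfrac n2$ is infeasible; B\&C solves an infeasible LP at the root and fathoms immediately, so the tree has size one. For $t\in[\tfrac{n+1}{2n},1)$ we have $\lfloor tn\rfloor\ge\tfrac{n+1}2>\tfrac n2$, so the cut is slack on $\{\vec x\ge\vec 0:2\sum_i x_i=n\}$ and leaves the relaxation unchanged; the relaxation stays feasible while the IP is infeasible (as $n$ is odd), forcing B\&C to branch.

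The heart of the argument is the exponential lower bound in this second regime, and this is the step I expect to be the main obstacle. Since the IP is infeasible, B\&C never finds a feasible integer point, so no incumbent ever exists and the only fathoming rule that can fire is LP-infeasibility. Every branching constraint is a bound $x_i\le v$ or $x_i\ge v+1$ with integer $v$, so each node is a box $\prod_i[\ell_i,h_i]$ (with $h_i$ possibly $+\infty$), these boxes tile $\R^n_{\ge0}$, and a node is fathomed exactly when $\sum_i\ell_i>\tfrac n2$ or $\sum_i h_i<\tfrac n2$ (the added cut is slack and does not affect this). I would then argue that any lattice point $\vec p\ge\vec 0$ with $\sum_i p_i=\tfrac{n-1}2$ lies in some leaf box, and since $\sum_i\ell_i\le\sum_i p_i=\tfrac{n-1}2<\tfrac n2$ that leaf is not fathomed by the lower condition; it is therefore fathomed by the upper one with all $h_i$ finite and $\sum_i h_i\le\tfrac{n-1}2$. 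Combined with $p_i\le h_i$ this forces $p_i=h_i$ for every $i$, so $\vec p$ is that leaf's (integer) upper corner; distinct such points thus occupy distinct leaves. Counting just the $0/1$ points with exactly $\tfrac{n-1}2$ ones gives at least $\binom{n}{(n-1)/2}\ge 2^{(n-1)/2}$ leaves, hence at least that many nodes. The delicate part is precisely this covering step: it relies on the variables being unbounded above, which is what prevents B\&C from cheaply fathoming the ``all-small'' region by upper bounds and instead forces a separate leaf for each half-sized lattice point.

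Finally I would dispatch two routine points. The objective $\vec c$ can be taken arbitrary because the absence of any feasible integer solution means bound-based fathoming never triggers, so neither the branching tree nor the lower bound depends on $\vec c$. And the parity construction needs $n$ odd; even $n$ is handled by applying the gadget to $n-1$ of the variables, which only changes the constant in the exponent and leaves the conclusion intact.
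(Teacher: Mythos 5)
Your proposal is correct, and its first half coincides exactly with the paper's proof: both use the Jeroslow parity instance $2(x[1]+\cdots+x[n])=n$ with $n$ odd, written as the two inequalities $2\sum_i x[i]\le n$ and $-2\sum_i x[i]\le -n$, compute the cut as $\sum_i x[i]\le \lfloor n(u[1]-u[2])\rfloor$, and observe that this floor jumps from $\frac{n-1}{2}$ to at least $\frac{n+1}{2}$ at the threshold $\frac{n+1}{2n}$, so the cut either empties the LP relaxation (immediate termination) or is slack and leaves the relaxation unchanged. Where you genuinely diverge is the second regime: the paper simply cites Jeroslow's 1974 theorem that B\&C without cuts builds a tree of size $2^{(n-1)/2}$ on this instance, whereas you re-prove that lower bound from scratch. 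Your counting argument is sound: since the IP is infeasible and every LP-feasible point has fractional coordinate sum, only LP-infeasibility can fathom; a leaf whose box contains a lattice point $\vec{p}\ge\vec{0}$ with $\sum_i p[i]=\frac{n-1}{2}$ cannot be fathomed by the lower condition, so it is fathomed by the upper one, which forces the (finite) upper corner of the box to equal $\vec{p}$; hence distinct such points occupy distinct leaves, giving at least $\binom{n}{(n-1)/2}\ge 2^{(n-1)/2}$ nodes. This buys self-containedness and robustness---it holds for any node- and variable-selection policy, and for the unbounded formulation $\vec{x}\ge\vec{0}$, which matches the paper's canonical form with literally two constraints (the paper's binary instance implicitly carries $n$ bound constraints); what the citation buys the paper is brevity. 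Two small quibbles: your remark that the counting ``relies on the variables being unbounded above'' is not actually right---the identical argument (upper corners forced to equal the lattice points) goes through with $\vec{x}\in\{0,1\}^n$, which is precisely Jeroslow's setting; and your even-$n$ patch changes the threshold $\frac{n+1}{2n}$ and the exponent, so strictly it proves a slightly different statement, though the paper's own ``without loss of generality, $n$ is odd'' has the same defect.
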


\begin{proof}[Proof sketch]
Without loss of generality, assume that $n$ is odd. Consider an IP with constraints $2(x[1]+\cdots+x[n])\le n$, $-2(x[1]+\cdots+x[n])\le -n$, $\vec{x}\in\{0,1\}^n$, and any objective. This IP is infeasible because $n$ is odd. \citet{Jeroslow74:Trivial} proved that without the use of cutting planes or heuristics, B\&C will build a tree of size $2^{(n-1)/2}$ before it terminates. We prove that when $\frac{1}{2}\le u[1]-u[2]<\frac{n+1}{2n}$, the CG cut halfspace defined by $\vec{u} = (u[1], u[2])$ has an empty intersection with the feasible region of the IP, causing B\&C to terminate immediately. This is illustrated in Figure~\ref{fig:good_cut}.
\begin{figure}
     \centering
     \begin{subfigure}[b]{0.47\textwidth}
         \centering
         \includegraphics{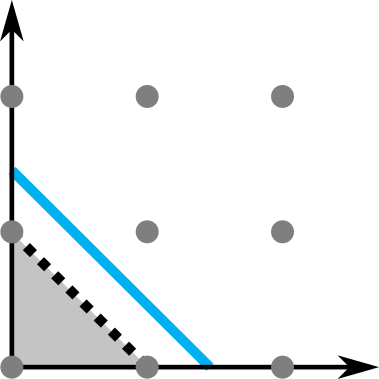}
         \caption{Cut produced when $\frac{1}{2} \leq u[1] - u[2] < \frac{2}{3}$. The grey solid region is the set of points $\vec{x}$ such that $x[1] + x[2] \leq 1.$}
         \label{fig:good_cut}
     \end{subfigure}
     \qquad
     \begin{subfigure}[b]{0.47\textwidth}
         \centering
         \includegraphics{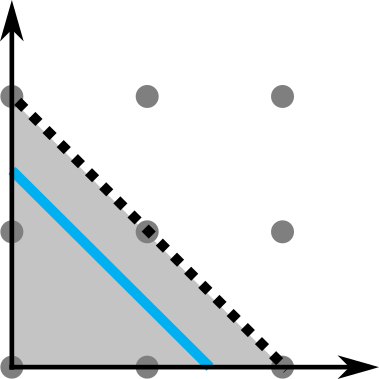}
         \caption{Cut produced when $\frac{2}{3} \leq u[1] - u[2] < 1$. The grey solid region is the set of points $\vec{x}$ such that $x[1] + x[2] \leq 2.$}
         \label{fig:bad_cut}
     \end{subfigure}
        \caption{Illustration of Theorem~\ref{thm:sensitive} when $n = 3$, projected onto the $x[3] = 0$ plane. The blue solid line is the feasible region $2x[1] + 2x[2] = 3$. The black dotted lines are the cut.}\label{fig:sensitivity}
\end{figure}
On the other hand, we show that if $\frac{n+1}{2n}\le u[1]-u[2]<1$, then the CG cut halfspace defined by $\vec{u}$ contains the feasible region of the IP, and thus leaves the feasible region unchanged. This is illustrated by Figure~\ref{fig:bad_cut}. In this case, due to~\citet{Jeroslow74:Trivial}, applying this CG cut at the root will cause B\&C to build a tree of size at least $2^{(n-1)/2}$ before it terminates.
\end{proof}

This theorem shows that the dual tree-size functions can be extremely sensitive to perturbations in the CG cut parameters. However, we are able to prove that the dual functions are piecewise-constant.

\begin{lemma}\label{lem:1plane}
For any IP $(\vec{c}, A, \vec{b})$, there are $O(\norm{A}_{1,1} + \norm{\vec{b}}_{1,1} + n)$ hyperplanes that partition $[0,1]^m$ into regions where in any one region $R$, the dual function $f_{\vec{c}, A, \vec{b}}^*(\vec{u})$ is constant for all $\vec{u} \in R$.
\end{lemma}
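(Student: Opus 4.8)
The plan is to exploit the fact that the CG cut defined by $\vec{u}$ enters B\&C only through the two integer-valued quantities $\lf\vec{u}^TA\rf\in\Z^n$ and $\lf\vec{u}^T\vec{b}\rf\in\Z$. Since every other aspect of B\&C (node selection, variable selection, branching order, the fathoming rules, and the cap $\kappa$) is held fixed in this section, the entire search tree B\&C produces is a deterministic function of the single cut it appends at the root, and that cut is completely specified by the pair $\bigl(\lf\vec{u}^TA\rf,\lf\vec{u}^T\vec{b}\rf\bigr)$. Consequently, I would first argue that whenever two parameter vectors $\vec{u},\vec{u}'$ satisfy $\lf\vec{u}^TA\rf=\lf(\vec{u}')^TA\rf$ and $\lf\vec{u}^T\vec{b}\rf=\lf(\vec{u}')^T\vec{b}\rf$, they induce the identical augmented LP relaxation and hence the identical tree, so $f^*_{\vec{c},A,\vec{b}}(\vec{u})=f^*_{\vec{c},A,\vec{b}}(\vec{u}')$. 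Thus it suffices to partition $[0,1]^m$ into regions on which all of these floors are simultaneously constant.

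The next step is to identify the boundaries of such regions. For each column index $i$, the entry $(\vec{u}^TA)[i]=\sum_{j=1}^m u[j]\,A[j][i]$ is a \emph{linear} function of $\vec{u}$, so $\lf(\vec{u}^TA)[i]\rf$ is piecewise constant and jumps only across the hyperplanes $\{\vec{u}:(\vec{u}^TA)[i]=z\}$ for integers $z$. The same holds for $\lf\vec{u}^T\vec{b}\rf$ via the linear function $\vec{u}^T\vec{b}=\sum_j u[j]\,b[j]$. Collecting all these hyperplanes (one family per column of $A$, plus one family for $\vec{b}$) yields an arrangement of hyperplanes in $[0,1]^m$ within each cell of which every relevant floor is constant, and therefore the dual function is constant by the previous paragraph.

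It then remains to count these hyperplanes, which is where the $\norm{A}_{1,1}+\norm{\vec{b}}_{1,1}+n$ bound comes from. Over the cube $[0,1]^m$, the linear map $\vec{u}\mapsto(\vec{u}^TA)[i]$ attains its extrema at vertices, so its range is an interval of length $\sum_{j=1}^m|A[j][i]|$; an interval of length $L$ contains at most $L+1$ integers, so this coordinate contributes at most $\sum_{j}|A[j][i]|+1$ integer-crossing hyperplanes. Summing over $i=1,\dots,n$ gives $\sum_{i,j}|A[j][i]|+n=\norm{A}_{1,1}+n$, and the analogous estimate for $\vec{u}^T\vec{b}$ contributes at most $\norm{\vec{b}}_{1,1}+1$ hyperplanes, for a grand total of $O(\norm{A}_{1,1}+\norm{\vec{b}}_{1,1}+n)$.

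The argument is short because the heavy lifting is done by the single observation that the cut is a function of the floors alone; the only places demanding care are the reduction ``identical cut $\Rightarrow$ identical tree'' (which is exactly where the blanket assumption that everything but the root cut is fixed is used, and which must be invoked so that the discontinuous, potentially exponential behavior established in Theorem~\ref{thm:sensitive} is nonetheless constant off the hyperplane arrangement) and the off-by-one bookkeeping in counting integers inside each coordinate's range. Neither is a genuine obstacle; the main conceptual content is recognizing that the volatility exhibited in Theorem~\ref{thm:sensitive} is confined to crossings of these finitely many linear level sets, so that the dual function, though wildly sensitive, is piecewise constant with polyhedral pieces.
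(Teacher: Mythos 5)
Your proof is correct and follows essentially the same route as the paper's: both identify the hyperplanes where some coordinate of $\lf\vec{u}^TA\rf$ or $\lf\vec{u}^T\vec{b}\rf$ changes value and count them via $\norm{A}_{1,1}$, $\norm{\vec{b}}_1$, and $n$. The only differences are cosmetic---you make explicit the step ``identical floors $\Rightarrow$ identical cut $\Rightarrow$ identical tree,'' which the paper leaves implicit, and your count of integers in each coordinate's range is marginally tighter than the paper's enumeration over $[-\norm{\vec{a}_i}_1, \norm{\vec{a}_i}_1]$, yielding the same $O(\cdot)$ bound.
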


\begin{proof}
Let $\vec{a}_1, \dots, \vec{a}_n \in \R^m$ be the columns of $A$. Let $A_i =\norm{\vec{a}_i}_1$ and $B = \norm{\vec{b}}_1$, so for any $\vec{u}\in [0, 1]^m$, $\lf \vec{u}^T \vec{a}_i\rf\in [-A_i, A_i]$ and $\lf \vec{u}^T \vec{b}\rf\in [-B, B]$. For each integer $k_i\in [-A_i, A_i]$, we have $$\lf \vec{u}^T \vec{a}_i\rf = k_i \iff k_i\le \vec{u}^T \vec{a}_i < k_i+1.$$ There are $\sum_{i = 1}^n 2A_i + 1 = O(\norm{A}_{1,1} + n)$ such halfspaces, plus an additional $2B+1$ halfspaces of the form $k_{n+1}\le \vec{u}^T \vec{b} < k_{n+1}+1$ for each $k_{n+1}\in\{-B,\ldots, B\}$. In any region $R$ defined by the intersection of these halfspaces, the vector $(\lfloor \vec{u}^T \vec{a}_1\rfloor,\ldots, \lfloor \vec{u}^T\vec{a}_n\rfloor, \lfloor \vec{u}^T\vec{b}\rfloor)$ is constant for all $\vec{u} \in R$, and thus so is the resulting cut.
\end{proof}

Combined with the main result of~\citet{Balcan21:How}, this lemma implies the following bound.

\begin{theorem}\label{thm:1plane}
Let $\cF_{\alpha, \beta}$ denote the set of all functions $f_{\vec{u}}$ for $\vec{u}\in[0,1]^m$ defined on the domain of IPs $(\vec{c}, A, \vec{b})$ with $\norm{A}_{1,1}\le\alpha$ and $\norm{\vec{b}}_1\le\beta$. Then, $\pdim(\cF_{\alpha, \beta})=O(m \log (m(\alpha + \beta + n))).$
\end{theorem}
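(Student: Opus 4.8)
The plan is to derive the bound as a short consequence of Lemma~\ref{lem:1plane} together with the general dual-based framework of \citet{Balcan21:How}. That framework converts a structural fact about the \emph{dual} functions---namely, that each is piecewise constant with the pieces cut out by a uniformly bounded number of hyperplanes---into a pseudo-dimension bound for the primal class $\cF_{\alpha,\beta}$. Lemma~\ref{lem:1plane} supplies exactly this structure, so the only real work is to (i) make the bound on the number of hyperplanes uniform over the entire instance domain and (ii) read off the right parameters (the parameter-space dimension $m$, boundary degree $1$, and the hyperplane count $k$) before invoking the framework.

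First I would fix an arbitrary IP $(\vec{c}, A, \vec{b})$ in the domain of $\cF_{\alpha,\beta}$ and apply Lemma~\ref{lem:1plane} to obtain a collection of $O(\norm{A}_{1,1} + \norm{\vec{b}}_1 + n)$ hyperplanes partitioning $[0,1]^m$ into cells on which the dual function $f^*_{\vec{c},A,\vec{b}}$ is constant. The point is that under the restrictions $\norm{A}_{1,1} \le \alpha$ and $\norm{\vec{b}}_1 \le \beta$ defining $\cF_{\alpha,\beta}$, this count is bounded \emph{uniformly} by $k := O(\alpha + \beta + n)$, independent of the particular instance. The cells and their positions vary with the instance; only the count must be uniform. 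This uniform, hyperplane-bounded, piecewise-constant structure of the dual functions is precisely the hypothesis required by \citet{Balcan21:How}.

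To see how the bound falls out---and to make the argument self-contained rather than relying on the precise external statement---I would run the standard arrangement-counting argument underlying the framework. Suppose $\cF_{\alpha,\beta}$ shatters $N$ instances $(\vec{c}_1, A_1, \vec{b}_1), \dots, (\vec{c}_N, A_N, \vec{b}_N)$ with witnesses $r_1, \dots, r_N$. Overlaying the $N$ per-instance arrangements produces at most $Nk$ hyperplanes in $\R^m$, and an arrangement of $Nk$ hyperplanes in $\R^m$ has at most $O((Nk)^m)$ cells. On any single cell of this common refinement all $N$ dual functions are simultaneously constant, so the sign pattern $(\mathbf{1}[f_{\vec{u}}(\vec{c}_i, A_i, \vec{b}_i) \ge r_i])_{i=1}^N$ is fixed throughout the cell; hence at most $O((Nk)^m)$ distinct labelings are achievable as $\vec{u}$ ranges over $[0,1]^m$. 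Shattering forces $2^N \le O((Nk)^m)$, and solving this inequality for $N$ yields $N = O(m\log(mk))$. Substituting $k = O(\alpha+\beta+n)$ gives $\pdim(\cF_{\alpha,\beta}) = O(m\log(m(\alpha+\beta+n)))$, as claimed; equivalently, one simply invokes the theorem of \citet{Balcan21:How} with dimension $m$, boundary degree $1$, and $k = O(\alpha+\beta+n)$ boundary functions.

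I expect no serious obstacle, since the genuinely nontrivial mathematics was already carried out in Lemma~\ref{lem:1plane}, which identifies that the cut---and therefore the entire B\&C execution and the tree size---depends on $\vec{u}$ only through the integer vector $(\lf \vec{u}^T\vec{a}_1\rf, \dots, \lf \vec{u}^T\vec{a}_n\rf, \lf \vec{u}^T\vec{b}\rf)$, which is constant on each cell of a hyperplane arrangement. The remaining points to handle with care are comparatively minor: confirming that the per-instance hyperplane count is uniformly bounded using the norm restrictions (this is the sole role of the parameters $\alpha$ and $\beta$), correctly applying the $O(H^m)$ bound on the number of cells of an arrangement of $H$ hyperplanes in $\R^m$, and checking that the inequality $2^N \le O((Nk)^m)$ indeed resolves to $N = O(m\log(mk))$. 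None of these introduces new difficulty.
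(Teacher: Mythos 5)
Your proposal is correct and takes essentially the same route as the paper: the paper's entire proof of Theorem~\ref{thm:1plane} is to combine Lemma~\ref{lem:1plane} with the main result of \citet{Balcan21:How}, invoked exactly as you do with parameter dimension $m$, degree-$1$ (hyperplane) boundaries, and a uniform bound of $O(\alpha+\beta+n)$ boundary functions per instance. Your overlay-the-arrangements, count-cells, and solve $2^N \le O((Nk)^m)$ argument is just the standard machinery inside that black box, so there is no substantive difference.
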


This theorem implies that $\widetilde O(\kappa^2 m / \varepsilon^2)$ samples are sufficient to ensure that with high probability, for every CG cut, the average size of the tree B\&C builds upon applying the cutting plane is within $\epsilon$ of the expected size of the tree it builds (the $\widetilde{O}$ notation suppresses logarithmic terms).

\subsection{Learning a sequence of cuts}

We now determine the sample complexity of making $W$ sequential CG cuts at the root. The first cut is defined by $m$ parameters $\vec{u}_{1}\in [0, 1]^m$ for each of the $m$ constraints. Its application leads to the addition of the row $\lfloor \vec{u}_1^TA \rfloor \vec{x} \leq \lfloor \vec{u}_1^T \vec{b}\rfloor$ to the constraint matrix. The next cut is then defined by $m+1$ parameters $\vec{u}_2 \in [0,1]^{m+1}$ since there are now $m+1$ constraints. Continuing in this fashion, the $W$th cut is defined by $m+W-1$ parameters $\vec{u}_{W}\in [0, 1]^{m+W-1}$. Let $f_{\vec{u}_{1},\ldots,\vec{u}_{W}}(\vec{c}, A, \vec{b})$ be the size of the tree B\&C builds when it applies the CG cut defined by $\vec{u}_{1}$, then applies the CG cut defined by $\vec{u}_{2}$ to the new IP, and so on, all at the root of the search tree.

As in Section~\ref{sec:single}, we bound the pseudo-dimension of the functions $f_{\vec{u}_{1},\ldots,\vec{u}_{W}}$ by analyzing the structure of the dual functions $f_{\vec{c}, A, \vec{b}}^*$, which measure tree size as a function of the parameters $\vec{u}_{1},\ldots,\vec{u}_{W}$. Specifically, $f_{\vec{c}, A, \vec{b}}^*: [0,1]^m\times\cdots\times [0, 1]^{m+W-1} \to \R$, where $f_{\vec{c}, A, \vec{b}}^*(\vec{u}_{1},\ldots,\vec{u}_{W}) = f_{\vec{u}_{1},\ldots,\vec{u}_{W}}(\vec{c}, A, \vec{b})$. The analysis in this section is more complex because the $w^{th}$ cut (with $w \in \{2, \dots, W\}$) depends not only on the parameters $\vec{u}_w$ but also on $\vec{u}_1, \dots, \vec{u}_{w-1}$. We prove that the dual functions are again piecewise-constant, but in this case, the boundaries between pieces are hypersurfaces defined by multivariate polynomials rather than hyperplanes. The full proof is in Appendix~\ref{app:cg_cuts}.

\begin{lemma}\label{lem:Wplane}
For any IP $(\vec{c}, A, \vec{b})$, there are $O(W2^W\norm{A}_{1,1} + 2^W\norm{\vec{b}}_1 + nW)$ multivariate polynomials in $\le W^2+mW$ variables of degree $\le W$ that partition $[0,1]^m\times\cdots\times [0, 1]^{m+W-1}$ into regions where in any one region $R$, $f_{\vec{c}, A, \vec{b}}^*(\vec{u}_{1},\ldots,\vec{u}_{W})$ is constant for all $(\vec{u}_{1},\ldots, \vec{u}_{W}) \in R$.
\end{lemma}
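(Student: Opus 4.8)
The plan is to prove the lemma by induction on the wave index $w$, maintaining an arrangement of polynomial hypersurfaces whose cells are regions on which the entire constraint system produced after $w$ waves is a fixed integer matrix--vector pair, and hence on which the tree B\&C builds is invariant. First I would set up notation tracking the evolving constraint system: write $A^{(0)} = A$, $\vec{b}^{(0)} = \vec{b}$, and for $w \ge 1$ let $A^{(w)}, \vec{b}^{(w)}$ be the system after appending the first $w$ cuts, so row $m+w$ of $(A^{(w)}, \vec{b}^{(w)})$ is $(\lf \vec{u}_w^T A^{(w-1)}\rf, \lf \vec{u}_w^T \vec{b}^{(w-1)}\rf)$. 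For each column $i$, let $g_w^i(\vec{u}_1,\dots,\vec{u}_w) = \sum_{j=1}^{m+w-1} u_w[j]\, A^{(w-1)}_{j,i}$ be the pre-floor linear form whose floor is the $i$th entry of the $w$th cut, with $h_w$ the analogous form for the right-hand side. The structural observation generalizing Lemma~\ref{lem:1plane} is that, once the lower-order floors are pinned down, $g_w^i$ is affine in $\vec{u}_w$ with coefficients (for the appended rows $j>m$) equal to the earlier cut coefficients $\lf g_t^i\rf$.

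The induction runs as follows. The base case $w=1$ is exactly Lemma~\ref{lem:1plane}: the degree-one surfaces $\{g_1^i = k\}$ and $\{h_1 = k\}$ cut $[0,1]^m$ into cells on which $A^{(1)}, \vec{b}^{(1)}$ are constant. For the step, I fix a cell $R$ of the arrangement built for the first $w-1$ waves; on $R$ every earlier coefficient $\lf g_t^i\rf$ ($t<w$) is a fixed integer, so $\lf g_w^i\rf$ changes only across $\{g_w^i = k\}$, which on $R$ is affine in $\vec{u}_w$. To describe these boundaries globally as genuine polynomials in all of $\vec{u}_1,\dots,\vec{u}_w$ rather than cell-by-cell affine forms, the natural route is to substitute for each appended coefficient $A^{(w-1)}_{m+t,i}=\lf g_t^i\rf$ the recursive expression defining it; unwinding through all $w$ waves should raise the total degree by one per wave, giving degree $\le w \le W$ in the $\le W^2+mW$ variables that comprise all cut parameters. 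This is the ``polynomial entries of bounded degree'' structure flagged in the introduction, and it is exactly where the analysis departs from the single-cut case, whose boundaries were plain hyperplanes; feeding this arrangement into the result of \citet{Balcan21:How} then yields the pseudo-dimension bound.

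To count the surfaces I would bound the magnitude of the pre-floor forms: an easy induction gives $|g_w^i| \le 2^{w-1}\norm{\vec{a}_i}_1$ (each wave at most doubles the worst-case coefficient magnitude, since $u_w[\cdot]\in[0,1]$ and the appended coefficients obey the same bound) and similarly $|h_w| \le 2^{w-1}\norm{\vec{b}}_1$. Hence $\lf g_w^i\rf$ takes $O(2^w\norm{\vec{a}_i}_1)$ integer values and $\lf h_w\rf$ takes $O(2^w\norm{\vec{b}}_1)$, so wave $w$ contributes $O(2^w(\norm{A}_{1,1}+n))$ surfaces from the constraint rows and $O(2^w\norm{\vec{b}}_1)$ from the right-hand side. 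Summing the additive per-column ``$+1$'' terms over the $n$ columns and $W$ waves produces the $nW$ term, and summing the geometric contributions over $w=1,\dots,W$ yields a bound of the claimed form $O(W2^W\norm{A}_{1,1}+2^W\norm{\vec{b}}_1+nW)$. On any cell of the final arrangement all $W$ cut vectors are simultaneously fixed, so B\&C solves an identical sequence of LP relaxations and builds an identical tree, proving $f^*_{\vec{c},A,\vec{b}}$ is constant there.

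The main obstacle I anticipate is precisely the globalization step: converting the cell-local affine boundaries into a single family of bounded-degree polynomials while keeping the count essentially linear (up to the $2^W$ factor) in $\norm{A}_{1,1}$. The two obvious fixes both fail. Replacing each inner floor $\lf g_t^i\rf$ by its unfloored form $g_t^i$ does give degree-$\le W$ polynomials, but it produces the \emph{wrong} arrangement: the discarded fractional parts shift where $g_w^i$ crosses an integer, so a single cell of the resulting arrangement can contain a genuine change in $\lf g_w^i\rf$ (a direct computation with a single column and a modest coefficient already exhibits this). Conversely, enumerating all admissible integer tuples for the inner floors keeps every boundary affine but inflates the count to a high power of $\norm{A}_{1,1}$. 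Threading between these extremes, so that the floor composition is faithfully captured by degree-$\le W$ surfaces whose number is governed by the doubling bound above, is the technical heart of the argument.
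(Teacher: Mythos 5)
Your proposal does not prove the lemma: the object the lemma asserts to exist---a single, globally defined family of $O(W2^W\norm{A}_{1,1}+2^W\norm{\vec{b}}_1+nW)$ polynomials of degree $\le W$ in $\le W^2+mW$ variables---is never constructed. Everything up to that point (the recursion on the evolving constraint system, the doubling bound $2^{w-1}\norm{\vec{a}_i}_1$ proved by induction, the per-wave count of integer thresholds, the degree and variable counts) matches the paper's proof. But you then declare the ``globalization'' step to be the technical heart, argue that the two natural ways of carrying it out both fail, and stop. As a standalone argument this is a genuine gap: nothing in the write-up certifies that the cell-local affine boundaries can be realized by few bounded-degree surfaces, which is precisely what the lemma claims.

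The comparison with the paper is more interesting than a missing idea, however, because the paper completes the proof by exactly the route you reject: it defines $\vec{\widetilde{a}}_i^w$ by appending the \emph{unfloored} entry $\vec{u}_{w-1}^T\vec{\widetilde{a}}_i^{w-1}$ to $\vec{\widetilde{a}}_i^{w-1}$, so that $\vec{u}_w^T\vec{\widetilde{a}}_i^w$ is a polynomial of degree $\le w$, and it takes the integer level sets of these polynomials as the hypersurfaces; invariance is then asserted for the quantities $\lf\vec{u}_w^T\vec{\widetilde{a}}_i^w\rf$, which carry only the outer floor. Your objection to this substitution appears to be sound under the paper's own definition of sequential cuts (Section~\ref{sec:cg_cuts} states that applying the first cut adds the \emph{floored} row to the constraint matrix, so the true wave-two coefficient is $\lf\vec{v}^T\vec{a}_i+\lambda\lf\vec{u}_1^T\vec{a}_i\rf\rf$, writing $\vec{u}_2=(\vec{v},\lambda)$). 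Concretely, take the IP $\max\{x: 2x\le 3,\ x\ge 0,\ x\in\Z\}$ and fix $u_1=3/4$, so the first cut is $x\le 2$. The parameter points $(\vec{v},\lambda)=(0.25,0.5)$ and $(0.24,0.5)$ lie in a common cell of the paper's arrangement: along the segment joining them the polynomials $2v+2u_1\lambda$, $3v+3u_1\lambda$, $2u_1$, $3u_1$ stay in the intervals $[1.23,1.25]$, $[1.845,1.875]$, $\{1.5\}$, $\{2.25\}$ and never cross an integer. Yet the true second cuts are $1\cdot x\le 1$ and $0\cdot x\le 1$ respectively, and B\&C terminates at the root in the first case but must branch in the second, so the dual tree-size function is \emph{not} constant on that cell. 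In other words, the paper's argument establishes the lemma only for the weaker ``unfloored'' semantics---in which later cuts act on the original rows plus unfloored combinations, i.e., rank-one CG cuts of the original system with weights in a larger box---and not for the nested-floor sequential cuts defined in the body of the paper. So the gap you flagged is real, but it is a gap in the paper's proof as much as in yours; closing it for genuine nested-floor cuts requires an arrangement built wave by wave, whose later surfaces are allowed to depend on the cells of the earlier arrangement, together with a count that does not degrade to a high power of $\norm{A}_{1,1}$---something neither your proposal nor the paper's proof supplies.
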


\begin{proof}[Proof sketch]
Let $\vec{a}_1, \dots, \vec{a}_n \in \R^m$ be the columns of $A$. For $\vec{u}_1\in [0,1]^m,\ldots, \vec{u}_W\in [0,1]^{m+W-1}$, define $\vec{\widetilde{a}}^1_i\in [0,1]^m,\ldots, \vec{\widetilde{a}}^W_i\in[0,1]^{m+W-1}$ for each $i \in [n]$ such that $\vec{\widetilde{a}}_i^w$ is the $i$th column of the constraint matrix after applying cuts $\vec{u}_1,\ldots,\vec{u}_{w-1}$. Similarly, define $\vec{\widetilde{b}}^w$ to be the constraint vector after applying the first $w-1$ cuts. In other words, we have the recurrence relation
\begin{alignat*}{2}
&\vec{\widetilde{a}}^1_i = \vec{a}_i &&\vec{\widetilde{b}}^1 = \vec{b} \\
    &\vec{\widetilde{a}}^w_i = \begin{bmatrix}\vec{\widetilde{a}}^{w-1}_i \\ \vec{u}^T_{w-1}\vec{\widetilde{a}}^{w-1}_i \end{bmatrix} \qquad     &&\vec{\widetilde{b}}^w = \begin{bmatrix}\vec{\widetilde{b}}^{w-1} \\ \vec{u}_{w-1}^T\vec{\widetilde{b}}^{w-1}\end{bmatrix}
\end{alignat*} 
for $w = 2,\ldots, W$. We prove, by induction, that $\lf\vec{u}_w^T\widetilde{\vec{a}}_i^w\rf \in \left[-2^{w-1}\norm{\vec{a}_i}_1, 2^{w-1}\norm{\vec{a}_i}_1\right].$ For each integer $k_i$ in this interval, $$\lf\vec{u}_w^T\vec{\widetilde{a}}_i^w\rf = k_i \iff k_i\le \vec{u}_w^T\vec{\widetilde{a}}_i^w < k_i+1.$$ The boundaries of these surfaces are defined by polynomials over $\vec{u}_w$ in $\le mw+w^2$ variables with degree $\le w$. Counting the total number of such hypersurfaces yields the lemma statement.
\end{proof}

We now use this structure to provide a pseudo-dimension bound. The full proof is in Appendix~\ref{app:cg_cuts}.

\begin{theorem}\label{thm:Wpdim}
Let $\cF_{\alpha, \beta}$ denote the set of all functions $f_{\vec{u}_1,\ldots,\vec{u}_W}$ for $\vec{u}_1\in [0,1]^m,\ldots,\vec{u}_W\in[0,1]^{m+W-1}$ defined on the domain of integer programs $(\vec{c}, A, \vec{b})$ with $\norm{A}_{1,1}\le\alpha$ and $\norm{\vec{b}}_1\le\beta$. Then, $\pdim(\cF_{\alpha, \beta}) =O(mW^2\log(mW(\alpha+\beta+n))).$
\end{theorem}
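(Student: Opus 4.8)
The plan is to combine the piecewise-constant structure established in Lemma~\ref{lem:Wplane} with the general pseudo-dimension bound of \citet{Balcan21:How}, exactly as Theorem~\ref{thm:1plane} was derived from Lemma~\ref{lem:1plane}, but now tracking a higher-dimensional parameter space and polynomial (rather than linear) boundaries. First I would pin down the dimension of the parameter space: since the $w$th cut is parameterized by $\vec{u}_w \in [0,1]^{m+w-1}$, the full vector $(\vec{u}_1, \ldots, \vec{u}_W)$ lives in $\R^N$ with $N = \sum_{w=1}^W (m + w - 1) = mW + \binom{W}{2} \le mW + W^2$, which matches the ``$\le W^2 + mW$ variables'' appearing in Lemma~\ref{lem:Wplane}.

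Next I would set up the shattering argument. Suppose $M$ integer programs $(\vec{c}_j, A_j, \vec{b}_j)$ with $\norm{A_j}_{1,1}\le\alpha$ and $\norm{\vec{b}_j}_1\le\beta$, together with witnesses $r_1,\ldots,r_M$, are shattered by $\cF_{\alpha,\beta}$. By Lemma~\ref{lem:Wplane}, each instance $j$ contributes a collection of $k = O(W2^W\alpha + 2^W\beta + nW)$ polynomials of degree $\le W$ in the $N$ parameters, and the dual function $f^*_{\vec{c}_j, A_j, \vec{b}_j}$ is constant on every cell of the partition these polynomials induce. Overlaying all $M$ instances produces an arrangement of $Mk$ polynomials of degree $\le W$ in $\R^N$; within any single cell of this arrangement every $f^*_{\vec{c}_j, A_j, \vec{b}_j}(\cdot)$ is constant, so the label vector $(\mathbf{1}[f^*_{\vec{c}_j, A_j, \vec{b}_j}(\vec{u}) \ge r_j])_{j=1}^M$ is constant as well. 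Hence the number of distinct labelings the parameters can realize is at most the number of cells of the arrangement.

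I would then bound the number of cells by the Warren/Milnor--Thom count that underlies \citet{Balcan21:How}: an arrangement of $Q$ polynomials of degree $\le d$ in $N$ variables has at most $(CQd/N)^N$ cells for an absolute constant $C$. With $Q = Mk$ and $d = W$, shattering forces $2^M \le (CMkW/N)^N$. Solving this self-referential inequality in the usual way gives $\pdim(\cF_{\alpha,\beta}) = M = O(N\log(kW))$. Substituting $N = O(mW + W^2)$ together with $\log(kW) = O\big(W + \log(W(\alpha+\beta+n))\big)$ and simplifying yields the stated bound; as a sanity check, setting $W = 1$ recovers Theorem~\ref{thm:1plane}.

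The main obstacle I anticipate is not the structural part but the bookkeeping in this final step. The number of boundary surfaces per instance grows like $2^W$, so one must be careful that this count enters only logarithmically---as an additive $O(W)$ inside the $\log$---and does not corrupt the leading $mW^2$ factor; likewise the degree $W$ must be carried correctly through the sign-pattern count. Pushing the exponents and the argument of the logarithm into the clean form $O(mW^2\log(mW(\alpha+\beta+n)))$ is a matter of the (routine but delicate) algebra of the self-referential inequality combined with the crude simplifications $mW + W^2 = O(mW^2)$ and $\log(2^W\cdot\mathrm{poly}) = O(W + \log\mathrm{poly})$.
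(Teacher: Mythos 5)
Your high-level template (shattering plus counting cells of the arrangement of boundary polynomials from Lemma~\ref{lem:Wplane}) is the same one the paper uses via \citet{Balcan21:How}, but the last step of your argument fails, and it is exactly the step you dismiss as routine algebra. Your Warren-based count gives $\pdim(\cF_{\alpha,\beta}) = O(N\log(kW))$ with $N = mW + W(W-1)/2$ and $\log(kW) = O\bigl(W + \log(W(\alpha+\beta+n))\bigr)$, where the additive $W$ inside the logarithm comes from the $2^W$ factor in the number of boundary polynomials. The product therefore contains the term $W^2\cdot W = W^3$ (after your simplification $N = O(mW^2)$, even $mW^3$), and this is \emph{not} $O(mW^2\log(mW(\alpha+\beta+n)))$: fix $m$, $n$, $\alpha$, $\beta$ to be constants and let $W\to\infty$; the stated bound is $O(W^2\log W)$ while your bound is $\Omega(W^3)$. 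The $2^W$ count of hypersurfaces is harmless only if the factor multiplying the logarithm is $O(mW)$ rather than $\Theta(mW+W^2)$, so taking the dimension in the cell count to be the raw number of parameters is precisely what corrupts the leading term.

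The missing idea is the one the paper's proof of Theorem~\ref{thm:Wpdim} is built around (and which its proof sketch explicitly contrasts with the generic bound you invoke): although the boundary polynomials involve $mW+W^2$ variables and have degree up to $W$, they all lie in a vector space of functions of dimension $1+mW$. Defining $\vec{\widetilde{u}}_1[j]=\vec{u}_1[j]$ and $\vec{\widetilde{u}}_w[j]=\vec{u}_w[j]+\sum_{\ell=1}^{w-1}\vec{u}_w[m+\ell]\,\vec{\widetilde{u}}_{\ell}[j]$, every polynomial $k + \vec{u}_w^T\vec{\widetilde{a}}_i^w$ lies in $\operatorname{span}\{1,\vec{\widetilde{u}}_w[1],\ldots,\vec{\widetilde{u}}_w[m]\}$, because for each $j\le m$ all monomials involving a coordinate $\vec{u}_{w'}[j]$ carry the same coefficient $\vec{a}_i[j]$. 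Consequently the dual class of boundary functions has VC dimension at most $1+mW$, so the number of sign patterns realized by $Q = Mk$ such polynomials is $Q^{O(1+mW)}$ rather than $(CQW/N)^N$. Rerunning your shattering inequality with this count gives $M = O\bigl(mW\log(mWk)\bigr) = O\bigl(mW^2 + mW\log(mW(\alpha+\beta+n))\bigr)$, which is within the stated bound because the additive $W$ in the logarithm is now multiplied only by $mW$. Without this span observation---i.e., using only degree and variable counts, as you do---the $W^3$ term cannot be removed, so the proposal as written does not prove the theorem.
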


\begin{proof}[Proof sketch]
The space of $0/1$ classifiers induced by the set of degree $\le W$ multivariate polynomials in $W^2+mW$ variables has VC dimension $O((W^2+mW)\log W)$~\cite{Anthony09:Neural}. However, we more carefully examine the structure of the polynomials considered in Lemma~\ref{lem:Wplane} to give an improved VC dimension bound of $1+mW$. For each $j=1,\ldots,m$ define $\vec{\widetilde{u}}_1[j],\ldots, \vec{\widetilde{u}}_W[j]$ recursively as
\begin{align*}
    \vec{\widetilde{u}}_1[j] &= \vec{u}_1[j] \\
    \vec{\widetilde{u}}_w[j] &= \vec{u}_w[j] + \sum_{\ell = 1}^{w-1}\vec{u}_w[m+\ell]\vec{\widetilde{u}}_{\ell}[j]\qquad\text{for }w = 2,\ldots, W.
\end{align*} The space of polynomials induced by the $w$th cut is contained in $\operatorname{span}\{1, \vec{\widetilde{u}}_w[1],\ldots,\vec{\widetilde{u}}_w[m]\}$. The intuition for this is as follows: consider the additional term added by the $w$th cut to the constraint matrix, that is, $\vec{u}_w^T\widetilde{\vec{a}}_i^w$. The first $m$ coordinates $(\vec{u}_w[1],\ldots,\vec{u}_w[m])$ interact only with $\vec{a}_i$---so $\vec{u}_w[j]$ collects a coefficient of $\vec{a}_i[j]$. Each subsequent coordinate $\vec{u}_w[m+\ell]$ interacts with all coordinates of $\widetilde{\vec{a}}_i^w$ arising from the first $\ell$ cuts. The term that collects a coefficient of $\vec{a}_i[j]$ is precisely $\vec{u}_w[m+\ell]$ times the sum of all terms from the first $\ell$ cuts with a coefficient of $\vec{a}_i[j]$.
Using standard facts about the VC dimension of vector spaces and their duals in conjunction with Lemma~\ref{lem:Wplane} and the framework of~\citet{Balcan21:How} yields the theorem statement.
\end{proof}

The sample complexity of learning $W$ sequential cuts is thus $\widetilde O(\kappa^2mW^2 / \epsilon^2).$

\subsection{Learning waves of simultaneous cuts}\label{sec:waves_simultaneous}

We now determine the sample complexity of making $W$ sequential waves of cuts at the root, each wave consisting of $k$ simultaneous CG cuts. Given vectors $\vec{u}^1_{1},\ldots,\vec{u}^k_{1}\in [0, 1]^m, \vec{u}^1_{2},\ldots,\vec{u}^k_2\in [0, 1]^{m+k},\ldots, \vec{u}^1_{W},\ldots,\vec{u}^k_W\in [0, 1]^{m+k(W-1)}$, let $f_{\vec{u}^1_{1},\ldots,\vec{u}^k_{1},\ldots,\vec{u}^1_{W},\ldots,\vec{u}^k_W}(\vec{c}, A, \vec{b})$ be the size of the tree B\&C builds when it applies the CG cuts defined by $\vec{u}^1_{1},\ldots,\vec{u}^k_1$, then applies the CG cuts defined by $\vec{u}^1_{2},\ldots,\vec{u}^k_2$ to the new IP, and so on, all at the root of the search tree. The full proof of the following theorem is in Appendix~\ref{app:cg_cuts}, and follows from the observation that $W$ waves of $k$ simultaneous cuts can be viewed as making $kW$ sequential cuts with the restriction that cuts within each wave assign nonzero weight only to constraints from previous waves.

\begin{theorem}\label{thm:Wkcuts_pdim}
Let $\cF_{\alpha, \beta}$ be the set of all functions $f_{\vec{u}^1_1,\ldots,\vec{u}^k_1,\ldots,\vec{u}^1_W,\ldots,\vec{u}^k_W}$ for $\vec{u}^1_{1},\ldots,\vec{u}^k_{1}\in [0, 1]^m,\ldots$, $\vec{u}^1_{W},\ldots,\vec{u}^k_W\in [0, 1]^{m+k(W-1)}$ defined on the domain of integer programs $(\vec{c}, A, \vec{b})$ with $\norm{A}_{1,1}\le\alpha$ and $\norm{\vec{b}}_1\le\beta$. Then, $\pdim(\cF_{\alpha, \beta})=O(mk^2W^2\log(mkW(\alpha+\beta+n))).$
\end{theorem}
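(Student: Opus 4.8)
The plan is to reduce the problem of analyzing $W$ waves of $k$ simultaneous cuts to the already-solved problem of analyzing $kW$ sequential cuts (Theorem~\ref{thm:Wpdim}). The key structural observation is that a wave of $k$ simultaneous cuts can be simulated by $k$ sequential cuts, \emph{provided} we enforce the constraint that none of these $k$ sequential cuts is allowed to reference the rows added by the others within the same wave. Concretely, within a single wave all $k$ cuts are computed from the same constraint matrix (the one produced by all \emph{previous} waves), so if we process them one at a time, each cut $\vec{u}^j_w$ must assign zero weight to the coordinates corresponding to the rows added by cuts $\vec{u}^1_w, \dots, \vec{u}^{j-1}_w$ earlier in the same wave. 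This is exactly a restriction to an axis-aligned subspace of the sequential-cut parameter space, so the resulting family of dual functions is a subfamily of the $kW$-sequential-cut family.

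First I would set up the simulation formally: relabel the $kW$ wave-indexed vectors $\vec{u}^1_1,\dots,\vec{u}^k_W$ as a sequence of $kW$ cut parameters $\vec{v}_1,\dots,\vec{v}_{kW}$ in the sense of Section~\ref{sec:single}, where $\vec{v}_{(w-1)k+j} = \vec{u}^j_w$. The total number of sequential cuts is $kW$, so the recurrence from the proof sketch of Lemma~\ref{lem:Wplane} applies with $W$ replaced by $kW$. Second, I would verify that the dual functions $f^*_{\vec{c},A,\vec{b}}$ for the wave setting are constant on exactly the same polynomial-bounded regions as in the sequential analysis, since fixing the integer vectors $(\lfloor \vec{v}_t^T \widetilde{\vec{a}}_i^t\rfloor, \lfloor \vec{v}_t^T \widetilde{\vec{b}}^t\rfloor)$ for all $t$ and $i$ determines every cut and hence the entire tree B\&C builds; the within-wave independence only removes (zeroes out) some variables and therefore cannot increase the number or degree of the boundary polynomials. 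Third, I would invoke Lemma~\ref{lem:Wplane} and the VC-dimension refinement from the proof sketch of Theorem~\ref{thm:Wpdim} with the substitution $W \mapsto kW$: the number of variables becomes $\le (kW)^2 + m(kW)$, the degree is $\le kW$, and the improved VC-dimension bound on the induced classifier space becomes $1 + m(kW)$. Plugging $kW$ into the bound $O(mW^2\log(mW(\alpha+\beta+n)))$ of Theorem~\ref{thm:Wpdim} yields $O(m(kW)^2\log(mkW(\alpha+\beta+n))) = O(mk^2W^2\log(mkW(\alpha+\beta+n)))$, matching the claimed bound.

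The main obstacle I anticipate is not the counting but the \emph{correctness} of the simulation: I must argue that restricting the sequential cuts to the within-wave-independent subspace genuinely reproduces every tree that the wave procedure can build, and no extras. The subtle direction is that the sequential family is strictly larger (it allows cut $j$ of a wave to depend on cuts $1,\dots,j-1$ of that same wave), so I am embedding the wave dual-function family into a restriction of the sequential family rather than equating them. This is fine for an upper bound on pseudo-dimension, since $\pdim$ is monotone under taking subfamilies and the uniform piecewise-constant structure is inherited by any subfamily; I would make this monotonicity explicit and confirm that the $\vec{u}^j_w$ parameter ranges (e.g. $\vec{u}^1_2 \in [0,1]^{m+k}$ rather than $[0,1]^{m+1}$) are consistent with treating the $\ell$th wave as contributing $k$ rows at once. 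Once the embedding is established, everything else is a mechanical substitution into the machinery already developed, so I would keep the formal argument short and route the reader to the $kW$-sequential analysis for the polynomial region construction and the $\pdim$ derivation via~\citet{Balcan21:How}.
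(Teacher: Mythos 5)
Your proposal is correct and follows essentially the same route as the paper: the paper's proof also observes that a wave of $k$ simultaneous cuts is equivalent to $k$ sequential cuts whose parameters assign zero weight to rows added within the same wave, concludes that the wave family is a subset of the $kW$-sequential-cut family, and invokes Theorem~\ref{thm:Wpdim} with $W$ replaced by $kW$ together with monotonicity of pseudo-dimension under subfamilies. Your additional care about the embedding direction and parameter-range consistency is sound but matches what the paper's (much terser) proof implicitly relies on.
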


This result implies that the sample complexity of learning $W$ waves of $k$ cuts is $\widetilde O(\kappa^2mk^2W^2 / \epsilon^2).$

\subsection{Data-dependent guarantees}
So far, our guarantees have depended on the maximum possible norms of the constraint matrix and vector in the domain of IPs under consideration. The uniform convergence result in Section~\ref{sec:LTbackground} for $\cF_{\alpha,\beta}$ only holds for distributions over $A$ and $\vec{b}$ with norms bounded by $\alpha$ and $\beta$, respectively. In Appendix~\ref{app:data}, we show how to convert these into more broadly applicable data-dependent guarantees that leverage properties of the distribution over IPs. These guarantees hold without assumptions on the distribution's support, and depend on $\E[\max_i\norm{A_i}_{1,1}]$ and $\E[\max_i\norm{\vec{b}_i}_1]$ (where the expectation is over the draw of $N$ samples), thus giving a sharper sample complexity guarantee that is tuned to the distribution.

\section{Learning cut selection policies}\label{sec:cut_policies}

In Section~\ref{sec:cg_cuts}, we studied the sample complexity of learning waves of specific cut parameters. In this section, we bound the sample complexity of learning \emph{cut-selection policies} at the root, that is, functions that take as input an IP and output a candidate cut. This is a more nuanced way of choosing cuts since it allows for the cut parameters to depend on the input IP. 

Formally, let $\cI_{m}$ be the set of IPs with $m$ constraints (the number of variables is always fixed at $n$) and let $\cH_{m}$ be the set of all hyperplanes in $\R^m$. A \emph{scoring rule} is a function $\score:\cup_m(\cH_{m}\times\cI_m)\to\R_{\ge 0}$. The real value $\score(\vec{\alpha}^T\vec{x}\le\beta, (\vec{c}, A, \vec{b}))$ is a measure of the quality of the cutting plane $\vec{\alpha}^T\vec{x}\le\beta$ for the IP $(\vec{c}, A, \vec{b})$. Examples include the scoring rules discussed in Section~\ref{sec:BandC}. Given a scoring rule and a family of cuts, a cut-selection policy applies the cut from the family with maximum score.

Suppose $\score_1,\ldots,\score_d$ are $d$ different scoring rules. We bound the sample complexity of learning a combination of these scoring rules that guarantees a low expected tree size. 

\begin{theorem}\label{thm:scoring}
	Let $\cC$ be a set of cutting-plane parameters such that for every IP $(\vec{c}, A, \vec{b})$, there is a decomposition of $\cC$ into $\le r$ regions such that the cuts generated by any two vectors in the same region are the same. Let $\score_1,\ldots,\score_d$ be $d$ scoring rules. For $\vec{\mu}\in\R^d$, let $f_{\vec{\mu}}(\vec{c}, A, \vec{b})$ be the size of the tree B\&C builds when it chooses a cut from $\cC$ to maximize $\mu[1]\score_1(\cdot, (\vec{c}, A, \vec{b}))+\cdots+\mu[d]\score_d(\cdot, (\vec{c}, A, \vec{b})).$ Then, $\pdim(\{f_{\vec{\mu}} : \vec{\mu} \in \R^d\})=O(d \log(rd))$.
\end{theorem}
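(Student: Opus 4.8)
The plan is to fix an arbitrary IP $(\vec{c}, A, \vec{b})$ and analyze the dual function $f^*_{\vec{c}, A, \vec{b}}(\vec{\mu}) = f_{\vec{\mu}}(\vec{c}, A, \vec{b})$ as a function of the scoring weights $\vec{\mu} \in \R^d$, showing that it is piecewise constant with pieces cut out by a small arrangement of hyperplanes; the pseudo-dimension bound then follows from the framework of \citet{Balcan21:How}. The crucial first observation is that, by hypothesis, $\cC$ decomposes into at most $r$ regions on which every parameter generates the same cut, so for this fixed IP there are at most $r$ distinct candidate cuts $c_1, \dots, c_r$. Each candidate $c_\ell$ has a fixed score vector $\vec{s}_\ell = (\, \score_1(c_\ell, (\vec{c}, A, \vec{b})), \dots, \score_d(c_\ell, (\vec{c}, A, \vec{b})) \,) \in \R^d$, and the cut-selection policy applies the cut $c_{\ell^*}$ with $\ell^* \in \arg\max_\ell \langle \vec{\mu}, \vec{s}_\ell \rangle$, breaking ties by a fixed rule.

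Second, I would show that the selected cut---and hence the tree B\&C builds and its size---is constant on each cell of a hyperplane arrangement in $\R^d$. For each pair $\ell \ne \ell'$, the locus where the two weighted scores agree is the hyperplane $\langle \vec{\mu}, \vec{s}_\ell - \vec{s}_{\ell'}\rangle = 0$, and there are at most $\binom{r}{2} = O(r^2)$ such hyperplanes. Within any open cell of this arrangement the relative order of the linear functionals $\vec{\mu} \mapsto \langle \vec{\mu}, \vec{s}_\ell\rangle$ is fixed, so $\arg\max_\ell$ is fixed, the same cut is applied, and therefore $f^*_{\vec{c}, A, \vec{b}}(\vec{\mu})$ is constant on the cell (ties occur only on the hyperplanes themselves, a measure-zero set that does not affect the cell count). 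Thus each dual function is piecewise constant with $O(r^2)$ hyperplane boundaries.

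Finally, I would convert this structure into the pseudo-dimension bound. Given $N$ IPs, the union of their arrangements consists of $O(Nr^2)$ hyperplanes in $\R^d$, which partition $\R^d$ into $O((Nr^2)^d)$ cells; on each cell every function $f_{\cdot}(\text{IP}_i)$ is simultaneously constant, so for any thresholds $z_1, \dots, z_N$ the number of distinct witness patterns $(\sigma_1,\ldots,\sigma_N)$ with $\sigma_i = 1$ exactly when $f_{\vec{\mu}}(\text{IP}_i) \ge z_i$ that are achievable over $\vec{\mu} \in \R^d$ is at most the number of cells. Shattering requires $2^N \le O((Nr^2)^d)$, and solving $N \le d \log(Nr^2) + O(d)$ yields $N = O(d \log(rd))$; equivalently, this is precisely the regime handled by \citet{Balcan21:How}, whose main theorem bounds the pseudo-dimension of a class whose dual functions are piecewise constant with boundaries drawn from the VC-dimension-$d$ class of halfspaces in $\R^d$. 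The main obstacle is the reduction carried out in the first two steps: recognizing that the hypothesized $\le r$-region structure collapses the a priori infinite family of cuts to finitely many candidates per instance, and that maximizing a $\vec{\mu}$-weighted score over these candidates induces only linear---rather than polynomial---boundaries in the parameter space, in sharp contrast to the hypersurface boundaries encountered in Lemma~\ref{lem:Wplane}.
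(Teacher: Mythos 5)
Your proposal is correct and follows essentially the same route as the paper's proof: fix an IP, reduce $\cC$ to at most $r$ representative cuts with fixed score vectors, observe that the $O(r^2)$ tie hyperplanes $\langle \vec{\mu}, \vec{s}_\ell - \vec{s}_{\ell'}\rangle = 0$ partition $\R^d$ into cells on which the chosen cut (hence the tree size) is constant, and invoke the framework of \citet{Balcan21:How}. The only difference is that you also unpack the cell-counting/shattering argument that the paper delegates to that framework, which is a fine but not essentially different elaboration.
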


\begin{proof}
Fix an integer program $(\vec{c}, A, \vec{b})$. Let $\vec{u}_1,\ldots,\vec{u}_r\in \cC$ be arbitrary cut parameters from each of the $r$ regions. Consider the hyperplanes $$\sum_{i = 1}^d \mu[i]\score_i(\vec{u}_s) = \sum_{i=1}^d\mu[i]\score_i(\vec{u}_t)$$ for each $s\neq t\in\{1,\ldots, r\}$ (suppressing the dependence on $\vec{c}, A, \vec{b}$). These $O(r^2)$ hyperplanes partition $\R^d$ into regions such that as $\vec{\mu}$ varies in a given region, the cut chosen from $\cC$ is invariant. The desired pseudo-dimension bound follows from the main result of \citet{Balcan21:How}. \end{proof}

Theorem~\ref{thm:scoring} can be directly instantiated with the class of CG cuts. Combining Lemma~\ref{lem:1plane} with the basic combinatorial fact that $k$ hyperplanes partition $\R^{m}$ into at most $k^m$ regions, we get that the pseudo-dimension of $\{f_{\vec{\mu}}:\vec{\mu}\in\R^d\}$ defined on IPs with $\norm{A}_{1,1}\le\alpha$ and $\norm{\vec{b}}_1\le\beta$ is $O(dm\log(d(\alpha+\beta+n)))$. Instantiating Theorem~\ref{thm:scoring} with the set of all sequences of $W$ CG cuts requires the following extension of scoring rules to sequences of cutting planes. A \emph{sequential scoring rule} is a function that takes as input an IP $(\vec{c}, A, \vec{b})$ and a sequence of cutting planes $h_1,\ldots, h_W$, where each cut lives in one higher dimension than the previous. It measures the quality of this sequence of cutting planes when applied one after the other to the original IP. Every scoring rule $\score$ can be naturally extended to a sequential scoring rule $\overline{\score}$ defined by $\overline{\score}(h_1,\ldots, h_W, (\vec{c}^0, A^0, \vec{b}^0)) = \sum_{i=0}^{d-1}\score(h_{i+1}, (\vec{c}^{i}, A^{i}, \vec{b}^{i}))$, where $(\vec{c}^i, A^i, \vec{b}^i)$ is the IP after applying cuts $h_1,\ldots, h_{i-1}$.

\begin{corollary} Let $\cC = [0,1]^m\times\cdots\times [0,1]^{m+W-1}$ denote the set of possible sequences of $W$ Chv\'{a}tal-Gomory cut parameters. Let $\score_1,\ldots,\score_d : \cC\times\cI_m\times\cdots\times\cI_{m+W-1}\to\R$ be $d$ sequential scoring rules and let $f_{\vec{\mu}}(\vec{c}, A, \vec{b})$ be as in Theorem~\ref{thm:scoring} for the class $\cC$. Then, $\pdim(\{f^W_{\vec{\mu}}:\vec{\mu}\in\R^d\})=O(dmW^2\log(dW(\alpha + \beta +n)))$.
\end{corollary}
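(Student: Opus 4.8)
The plan is to instantiate Theorem~\ref{thm:scoring} with $\cC = [0,1]^m\times\cdots\times[0,1]^{m+W-1}$, the space of sequences of $W$ Chv\'atal--Gomory cut parameters, and with the $d$ sequential scoring rules $\score_1,\ldots,\score_d$ playing the role of the scoring rules in that theorem. The whole task reduces to producing a valid bound on the number of regions $r$ required by Theorem~\ref{thm:scoring}: for each fixed IP $(\vec c, A, \vec b)$ we need a decomposition of $\cC$ into $\le r$ regions such that any two parameter sequences in a common region generate the same sequence of cuts. Lemma~\ref{lem:Wplane} supplies exactly such a decomposition, since within any one of its regions the generated cut sequence---and hence the entire sequence of intermediate IPs, and therefore the value of every sequential scoring rule $\overline{\score}_i$---is constant. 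Thus $r$ is at most the number of regions cut out of $\cC$ by the $P = O(W2^W\norm{A}_{1,1} + 2^W\norm{\vec b}_1 + nW)$ boundary polynomials of Lemma~\ref{lem:Wplane}, and once $r$ is bounded the corollary follows from the $O(d\log(rd))$ guarantee of Theorem~\ref{thm:scoring} together with $\norm{A}_{1,1}\le\alpha$ and $\norm{\vec b}_1\le\beta$.

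First I would count the regions naively, to see why that is not good enough: treating the $P$ boundary polynomials as arbitrary degree-$\le W$ polynomials in the $\le W^2+mW$ underlying variables, a Milnor--Thom/Warren-type sign-pattern bound gives $\log r = O((W^2+mW)\log(PW))$. Since $\log(PW) = O(W + \log(\alpha+\beta+n))$, this carries a stray $W^2\cdot W = W^3$ term and overshoots the target.

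The key step---and the main obstacle---is to replace the ambient dimension $W^2+mW$ by the \emph{effective} dimension $1+mW$, exactly as in the proof of Theorem~\ref{thm:Wpdim}. Using the recursively defined quantities $\widetilde{\vec u}_w[j]$ from that proof, the boundary polynomials contributed by the $w$th wave all lie in the $(m+1)$-dimensional function space $\operatorname{span}\{1,\widetilde{\vec u}_w[1],\ldots,\widetilde{\vec u}_w[m]\}$ (each boundary has the form $\sum_j \vec a_i[j]\,\widetilde{\vec u}_w[j] - k$ for an integer $k$). Taking the union over all $W$ waves, every one of the $P$ boundary polynomials lies in a single vector space of functions $V$ of dimension $D = 1+mW$: the constant function is shared, and each wave contributes $m$ new basis functions. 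I would then bound $r$ by the number of sign patterns of $P$ functions drawn from a $D$-dimensional vector space. Writing each boundary as $\langle A_i, \Phi(\cdot)\rangle$ for a fixed basis map $\Phi$ into $\R^D$, the attainable sign patterns are those of $P$ linear forms on the point set $\{\Phi(\vec u_1,\ldots,\vec u_W)\}\subseteq\R^D$, which is at most $\sum_{i=0}^{D}\binom{P}{i} = O(P^D)$ by the standard hyperplane-arrangement bound. Since every region of Lemma~\ref{lem:Wplane} is determined by the sign pattern of these boundaries, $r = O(P^D)$.

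Combining these pieces, $\log r = O(D\log P) = O(mW\log P) = O(mW^2 + mW\log(\alpha+\beta+n)) = O(mW^2\log(W(\alpha+\beta+n)))$. Plugging this into Theorem~\ref{thm:scoring} yields $\pdim(\{f^W_{\vec\mu}:\vec\mu\in\R^d\}) = O(d\log(rd)) = O(d(\log r + \log d)) = O(dmW^2\log(dW(\alpha+\beta+n)))$, as claimed. The only genuinely delicate point is the effective-dimension reduction in the third step; everything else is bookkeeping with the region count and a direct appeal to Theorem~\ref{thm:scoring}, Theorem~\ref{thm:Wpdim}, and Lemma~\ref{lem:Wplane}.
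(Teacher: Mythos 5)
Your proposal is correct and follows essentially the same route as the paper's proof: Lemma~\ref{lem:Wplane} supplies the per-IP decomposition, the effective-dimension reduction to $1+mW$ (via the recursively defined $\vec{\widetilde{u}}_w[j]$ from Theorem~\ref{thm:Wpdim}) bounds the number of regions by $O\left(W2^W\alpha + 2^W\beta + nW\right)^{1+mW}$, and Theorem~\ref{thm:scoring} finishes the bookkeeping. The only difference is presentational: where the paper invokes Claim 3.5 of \citet{Balcan21:How} (a region count in terms of the dual family's VC dimension), you prove the same $O(P^{1+mW})$ region bound directly by mapping the parameters through a feature map into $\R^{1+mW}$ and counting sign patterns of a hyperplane arrangement there, which is precisely the argument underlying that claim in this linear-span special case.
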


\begin{proof}
In Lemma~\ref{lem:Wplane} and Theorem~\ref{thm:Wpdim} we showed that there are $O(W2^W\alpha+2^W\beta+nW)$ multivariate polynomials that belong to a family of polynomials $\cG$ with $\VC(\cG^*)\le 1+mW$ ($\cG^*$ denotes the dual of $\cG$) that partition $\cC$ into regions such that resulting sequence of cuts is invariant in each region. By Claim 3.5 by \citet{Balcan21:How}, the number of regions is $$O(W2^W\alpha+2^W\beta+nW)^{\VC(\cG^*)} \le O(W2^W\alpha+2^W\beta+nW)^{1+mW}.$$ The corollary then follows from Theorem~\ref{thm:scoring}.
\end{proof}

These results bound the sample complexity of learning cut-selection policies based on scoring rules, which allow the cuts that B\&C selects to depend on the input IP.

\section{Sample complexity of generic tree search}\label{sec:tree_search}

In this section, we study the sample complexity of selecting high-performing parameters for generic tree-based algorithms, which are a generalization of B\&C. This abstraction allows us to provide guarantees for simultaneously optimizing key aspects of tree search beyond cut selection, including node selection and branching variable selection. We also generalize the previous sections by allowing actions (such as cut selection) to be taken at any stage of the tree search---not just at the root.

Tree search algorithms take place over a series of $\kappa$ \emph{rounds} (analogous to the B\&C tree-size cap $\kappa$ in the previous sections). There is a sequence of $t$ \emph{steps} that the algorithm takes on each round. For example, in B\&C, these steps include node selection, cut selection, and variable selection. The specific \emph{action} the algorithm takes during each step (for example, which node to select, which cut to include, or which variable to branch on) typically depends on a \emph{scoring rule}. This scoring rule weights each possible action and the algorithm performs the action with the highest weight.
These actions (deterministically) transition the algorithm from one \emph{state} to another. This high-level description of tree search is summarized by Algorithm~\ref{alg:TS}.
For each step $j \in [t]$, the number of possible actions is $T_j \in \N$. There is a scoring rule $\score_j$, where $\score_j(k,s) \in \R$ is the weight associated with the action $k \in \left[T_j\right]$ when the algorithm is in the state $s$.
\begin{algorithm}
	\caption{Tree search}\label{alg:TS}
	\begin{algorithmic}[1]
		\Require Problem instance, $t$ scoring rules $\score_1, \dots, \score_t$, number of rounds $\kappa$.
		\State $s_{1,1} \leftarrow$ Initial state of algorithm
		\For {each round $i \in [\kappa]$}
		\For {each step $j \in [t]$}
		\State Perform the action $k \in \left[T_j\right]$ that maximizes $\score_j\left(k, s_{i,j}\right)$\label{step:score}
		\State $s_{i, j + 1}\leftarrow$ New state of algorithm
		\EndFor
		\State $s_{i+1, 1} \leftarrow s_{i, t + 1}$ \Comment{State at beginning of next round equals state at end of this round}
		\EndFor
		\Ensure Incumbent solution in state $s_{\kappa, t+1}$, if one exists.
	\end{algorithmic}
\end{algorithm}

There are often several scoring rules one could use, and it is not clear which to use in which scenarios. As in Section~\ref{sec:cut_policies}, we provide guarantees for learning combinations of these scoring rules for the particular application at hand. More formally, for each step $j \in [t]$, rather than just a single scoring rule $\score_j$ as in Step~\ref{step:score}, there are $d_j$ scoring rules $\score_{j,1}, \dots, \score_{j,d_j}$. Given parameters $\vec{\mu}_j = \left(\mu_{j}[1], \dots, \mu_{j}[d_j]\right) \in \R^{d_j}$, the algorithm takes the action $k \in [T_j]$ that maximizes $\sum_{i = 1}^{d_j}\mu_j[i]\score_{j,i}(k,s)$. There is a distribution $\dist$ over inputs $x$ to Algorithm~\ref{alg:TS}. For example, when this framework is instantiated for B\&C, $x$ is an integer program $(\vec{c}, A, \vec{b})$. There is a utility function $f_{\vec{\mu}}(x) \in [-H, H]$ that measures the utility of the algorithm parameterized by $\vec{\mu}= \left(\vec{\mu}_1, \dots, \vec{\mu}_t\right)$ on input $x$. For example, this utility function might measure the size of the search tree that the algorithm builds (in which case one can take $H\le\kappa$). We assume that this utility function is \emph{final-state-constant}:
\begin{definition}
Let $\vec{\mu}= \left(\vec{\mu}_1, \dots, \vec{\mu}_t\right)$ and $\vec{\mu}'= \left(\vec{\mu}_1', \dots, \vec{\mu}_t'\right)$ be two parameter vectors. Suppose that we run Algorithm~\ref{alg:TS} on input $x$ once using the scoring rule $\score_j = \sum_{i = 1}^{d_j}\mu_{j}[i]\score_{j,i}$ and once using the scoring rule $\score_j = \sum_{i = 1}^{d_j}\mu_{j}'[i]\score_{j,i}$. Suppose that on each run, we obtain the same final state $s_{\kappa, t+1}$. The utility function is \emph{final-state-constant} if $f_{\vec{\mu}}(x) = f_{\vec{\mu}'}(x)$.
\end{definition}
We provide a sample complexity bound for learning the parameters $\vec{\mu}$. The full proof is in Appendix~\ref{app:tree_search}.
 
\begin{theorem}\label{thm:main}
	Let $d  = \sum_{j = 1}^t d_j$ denote the total number of tunable parameters of tree search. Then, $$\pdim (\{f_{\vec{\mu}} : \vec{\mu} \in \R^d\})=O\Bigg(d \kappa \sum_{j = 1}^t \log T_j + d\log d \Bigg).$$
\end{theorem}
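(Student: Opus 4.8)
The plan is to fix an input $x$ and understand the dual function $\vec{\mu}\mapsto f_{\vec{\mu}}(x)$, showing it is piecewise constant with the pieces cut out by hyperplanes in $\R^d$, and then to invoke the framework of \citet{Balcan21:How} exactly as in the proof of Theorem~\ref{thm:scoring}. The starting observation is that since $f$ is final-state-constant, $f_{\vec{\mu}}(x)$ depends on $\vec{\mu}$ only through the final state $s_{\kappa, t+1}$, and hence only through the sequence of $\kappa t$ actions that Algorithm~\ref{alg:TS} takes. So it suffices to partition $\R^d$ into regions on which this entire execution path is invariant.

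Next I would analyze a single decision. When Algorithm~\ref{alg:TS} is at state $s$ during a step of type $j$, it selects the action $k \in [T_j]$ maximizing $\sum_{i=1}^{d_j}\mu_j[i]\score_{j,i}(k,s)$. For a fixed state $s$, action $k$ is preferred to action $k'$ exactly when $\sum_{i=1}^{d_j}\mu_j[i]\bigl(\score_{j,i}(k,s)-\score_{j,i}(k',s)\bigr)\ge 0$, which is a halfspace in $\vec{\mu}$ since it is linear in the coordinates $\mu_j[1],\dots,\mu_j[d_j]$, and hence in $\vec{\mu}\in\R^d$. Thus, conditioned on the state $s$ being fixed, the action chosen at that step is determined by the signs of $\binom{T_j}{2}$ hyperplanes.

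The main obstacle is that these comparison hyperplanes depend on the current state $s_{i,j}$, which is itself a function of $\vec{\mu}$, so there is no fixed, small hyperplane arrangement to point to a priori. I would resolve this by observing that as $\vec{\mu}$ ranges over $\R^d$, only finitely many states can ever be reached at any given decision: a reachable state is determined by the actions taken so far, so the number of realized execution paths is at most $P := \bigl(\prod_{j=1}^t T_j\bigr)^{\kappa}$. Enumerating, for each reachable state and each step of type $j$, the $\binom{T_j}{2}$ comparison hyperplanes, I obtain a single arrangement of at most $M \le \kappa t \binom{\max_j T_j}{2}\, P$ hyperplanes in $\R^d$. Within each cell of this arrangement every pairwise comparison has a fixed sign, so the whole execution path, and therefore $f_{\vec{\mu}}(x)$, is constant. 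Equivalently, one can build the arrangement incrementally, processing the $\kappa t$ decisions in order and refining each current cell by the $\binom{T_j}{2}$ hyperplanes appropriate to its now-fixed state, with the reachable-state count controlling the growth.

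Finally, since the boundary functions are linear, applying the framework of \citet{Balcan21:How} exactly as in Theorem~\ref{thm:scoring} gives $\pdim(\{f_{\vec{\mu}} : \vec{\mu}\in\R^d\}) = O(d\log(Md))$. Substituting $\log M = \kappa\sum_{j=1}^t\log T_j + O\bigl(\log(\kappa t \max_j T_j)\bigr)$ and absorbing the lower-order logarithmic terms into $d\kappa\sum_j \log T_j$ and $d\log d$ yields the claimed bound $O\bigl(d\kappa\sum_{j=1}^t \log T_j + d\log d\bigr)$. I expect the state-dependence of the boundaries, handled via the reachable-state and path count, to be the crux; the remainder is the now-standard arrangement-counting argument packaged in \citet{Balcan21:How}.
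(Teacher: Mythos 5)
Your proposal is correct and follows essentially the same route as the paper: fix the instance, reduce via final-state-constancy to invariance of the action sequence, note that each decision at a fixed (reachable) state is governed by pairwise-comparison hyperplanes that are linear in $\vec{\mu}$, bound the total arrangement size by enumerating the at most $\bigl(\prod_{j=1}^t T_j\bigr)^{\kappa}$ execution paths, and invoke the framework of \citet{Balcan21:How}. The paper packages the same counting as Claim~\ref{claim:num_hyp} (an induction giving $\kappa\sum_{j=1}^t T_j^2$ halfspaces per fixed action sequence, then a union over sequences), which matches your incremental refinement argument up to constants.
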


\begin{proof}[Proof sketch]
We prove that there is a set of hyperplanes splitting the parameter space into regions such that if tree search uses any parameter setting from a single region, it will always take the same sequence of actions (including node, variable, and cut selection). The main subtlety is an induction argument to count these hyperplanes that depends on the current step of the tree-search algorithm.
\end{proof}

In the context of integer programming, Theorem~\ref{thm:main} not only recovers the main result of~\citet{Balcan18:Learning} for learning variable selection policies, but also yields a more general bound that simultaneously incorporates cutting plane selection, variable selection, and node selection. In B\&C, the first action of each round is to select a node. Since there are at most $\kappa$ nodes expanded by B\&C, $T_1 \le \kappa$. The second action is to choose a cutting plane. As in Theorem~\ref{thm:scoring}, let $\cC$ be a family of cutting planes such that for every IP $(\vec{c}, A, \vec{b})$, there is a decomposition of the parameter space into $\le r$ regions such that the cuts generated by any two parameters in the same region are the same. Therefore, $T_2\le r$. The last action is to choose a variable to branch on at that node, so $T_3 = n$. Applying Theorem~\ref{thm:main}, $$\pdim(\{f_{\vec{\mu}} : \vec{\mu} \in \R^d\})=O\left(d \kappa(\log\kappa + \log r +\log n) + d\log d \right).$$ Ignoring $T_1$ and $T_2$, thereby only learning the variable selection policy, recovers the $O(d\kappa\log n + d\log d)$ bound of~\citet{Balcan18:Learning}.

\section{Conclusions and future research}
We provided the first provable guarantees for using machine learning to configure cutting planes and cut-selection policies. We analyzed the sample complexity of learning cutting planes from the popular family of Chv\'{a}tal-Gomory (CG) cuts. We then provided sample complexity guarantees for learning parameterized cut-selection policies, which allow the branch-and-cut algorithm to adaptively apply cuts as it builds the search tree. We showed that this analysis can be generalized to simultaneously capture various key aspects of tree search beyond cut selection, such as node and variable selection.

This paper opens up a variety questions for future research. For example, which other cut families can we learn over with low sample complexity? Section~\ref{sec:cg_cuts} focused on learning within the family of CG cuts (Sections~\ref{sec:cut_policies} and \ref{sec:tree_search} applied more generally). There are many other families, such as \emph{Gomory mixed-integer cuts} and \emph{lift-and-project cuts}, and a sample complexity analysis of these is an interesting direction for future research (and would call for new techniques). In addition, can we use machine learning to design improved scoring rules and heuristics for cut selection?

\subsection*{Acknowledgements}
This material is based on work supported by the National Science Foundation under grants IIS-1718457, IIS-1901403, IIS-1618714, and CCF-1733556, CCF-1535967, CCF-1910321, and SES-1919453, the ARO under award W911NF2010081, the Defense Advanced Research Projects Agency under cooperative agreement HR00112020003, an AWS Machine Learning Research Award, an Amazon Research Award, a Bloomberg Research Grant, and a Microsoft Research Faculty Fellowship.

\bibliographystyle{plainnat}
\bibliography{dairefs}


\appendix

\section{Additional background about cutting planes}\label{app:background}

\paragraph{Integral support~\citep{Wesselmann12:Implementing}.}  Let $Z$ be the set of all indices $\ell \in [n]$ such that $\vec{\alpha}[\ell] \not= 0$. Let $\bar{Z}$ be the set of all indices $\ell \in Z$ such that the $\ell^{th}$ variable is constrained to be integral. This scoring rule is defined as \[\score_4(\vec{\alpha}^T\vec{x}\le\beta) = \frac{\left|\bar{Z}\right|}{|Z|}.\] \citet{Wesselmann12:Implementing} write that ``one may argue that a cut having non-zero coefficients on many (possibly fractional) integer variables is preferable to a cut which consists mostly of continuous variables.''

\section{Omitted results and proofs from Section~\ref{sec:cg_cuts}}\label{app:cg_cuts}

\begin{proof}[Proof of Theorem~\ref{thm:sensitive}]
Without loss of generality, we assume that $n$ is odd. We define the integer program \begin{equation}\begin{array}{ll} \text{maximize} & 0\\
\text{subject to} & 2x[1] + \cdots + 2x[n] = n\\
& \vec{x} \in \{0,1\}^n,
\end{array}\label{eq:Jeroslow}\end{equation}
which is infeasible because $n$ is odd.
\citet{Jeroslow74:Trivial} proved that without the use of cutting planes or heuristics, B\&C will build a tree of size $2^{(n-1)/2}$ before it terminates. Rewriting the equality constraint as $2x[1] + \cdots + 2x[n] \leq n$ and $-2\left(x[1] + \cdots + x[n]\right)\leq -n$, a CG cut defined by the vector $\vec{u} \in \R^2_{\geq 0}$ will have the form $$\left\lfloor 2(u[1] - u[2])\right\rfloor\left(x[1] + \cdots + x[n]\right) \leq \left\lfloor n\left(u[1] - u[2]\right)\right\rfloor.$$

Suppose that $\frac{1}{2} \leq u[1] - u[2] < \frac{n+1}{2n}$. On the left-hand-side of the constraint, $\left\lfloor 2(u[1] - u[2])\right\rfloor = 1$. On the right-hand-side of the constraint, $n\left(u[1] - u[2]\right) < \frac{n+1}{2}$. Since $n$ is odd, $\frac{n+1}{2}$ is an integer, which means that  $\left\lfloor n \left(u[1] - u[2]\right) \right\rfloor \leq \frac{n-1}{2}$. Therefore, the CG cut defined by $\vec{u}$ satisfies the inequality $x[1] + \cdots + x[n] \leq \frac{n-1}{2}$, as illustrated in Figure~\ref{fig:good_cut}. The intersection of this halfspace with the feasible region of the original integer program (Equation~\eqref{eq:Jeroslow}) is empty, so applying this CG cut at the root will cause B\&C to terminate immediately.

Meanwhile, suppose that $\frac{n+1}{2n} \leq u[1] - u[2] < 1$. Then it is still the case that $\left\lfloor 2(u[1] - u[2])\right\rfloor = 1$. Also, $n\left(u[1] - u[2]\right) \geq \frac{n+1}{2}$, which means that $\left\lfloor n \left(u[1] - u[2]\right) \right\rfloor \geq \frac{n+1}{2}$. Therefore, the CG cut defined by $\vec{u}$ dominates the inequality $x[1] + \cdots + x[n] \leq \frac{n+1}{2}$, as illustrated in Figure~\ref{fig:bad_cut}.
The intersection of this halfspace with the feasible region of the original integer program is equal to the integer program's feasible region, so by Jeroslow's result~\citep{Jeroslow74:Trivial}, applying this CG cut at the root will cause B\&C to build a tree of size at least $2^{(n-1)/2}$ before it terminates.
\end{proof}

\begin{proof}[Proof of Lemma~\ref{lem:Wplane}]
Let $\vec{a}_1, \dots, \vec{a}_n \in \R^m$ be the columns of $A$. For $\vec{u}_1\in [0,1]^m,\ldots, \vec{u}_W\in [0,1]^{m+W-1}$, define $\vec{\widetilde{a}}^1_i\in [0,1]^m,\ldots, \vec{\widetilde{a}}^W_i\in[0,1]^{m+W-1}$ for each $i = 1,\ldots, n$ such that $\vec{\widetilde{a}}_i^w$ is the $i$th column of the constraint matrix after applying cuts $\vec{u}_1,\ldots,\vec{u}_{w-1}$. In other words, $\vec{\widetilde{a}}^1_i\in [0,1]^m,\ldots, \vec{\widetilde{a}}^W_i\in[0,1]^{m+W-1}$ are defined recursively as
\begin{align*}
    \vec{\widetilde{a}}^1_i &= \vec{a}_i \\
    \vec{\widetilde{a}}^w_i &= \begin{bmatrix}\vec{\widetilde{a}}^{w-1}_i \\ \vec{u}^T_{w-1}\vec{\widetilde{a}}^{w-1}_i \end{bmatrix}
\end{align*} for $w = 2,\ldots, W$. Similarly, define $\vec{\widetilde{b}}^w$ to be the constraint vector after applying the first $w-1$ cuts:
\begin{align*}
    \vec{\widetilde{b}}^1 &= \vec{b} \\
    \vec{\widetilde{b}}^w &= \begin{bmatrix}\vec{\widetilde{b}}^{w-1} \\ \vec{u}_{w-1}^T\vec{\widetilde{b}}^{w-1}\end{bmatrix}
\end{align*} for $w = 2,\ldots, W$. (These vectors depend on the cut parameters, but we will suppress this dependence for the sake of readability).

We prove this lemma by showing that there are $O(W2^W\norm{A}_{1, 1} + 2^W\norm{\vec{b}}_{1} + nW)$ hypersurfaces determined by polynomials that partition $[0, 1]^m\times \cdots\times [0, 1]^{m+W-1}$ into regions where in any one region $R$, the $W$ cuts 
\begin{align*}
    \sum_{i=1}^n\lf\vec{u}_{1}^{T}\vec{\widetilde{a}}^1_i\rf x[i] &\le\lf\vec{u}_{1}^{T} \vec{\widetilde{b}}^1\rf \\
    &\,\,\,\vdots \\ 
    \sum_{i=1}^n \lf \vec{u}_W^T\vec{\widetilde{a}}^W_i\rf x[i]&\le \lf\vec{u}_W^T\vec{\widetilde{b}}^W\rf
\end{align*}
are invariant across all $(\vec{u}_1,\ldots,\vec{u}_W)\in R$. To this end, let $A_i = \norm{\vec{a}_i}_1$ and $B = \norm{\vec{b}}_1$. For each $w\in \{1,\ldots, W\}$, we claim that $$\lf\vec{u}_w^T\vec{\widetilde{a}}_i^w\rf\in\left[-2^{w-1}A_i, 2^{w-1}A_i\right].$$ We prove this by induction. The base case of $w = 1$ is immediate since $\vec{\widetilde{a}}_i^1 = \vec{a}_i$ and $\vec{u} \in [0,1]^m$. Suppose now that the claim holds for $w$. By the induction hypothesis, $$\norm{\vec{\widetilde{a}}_i^{w+1}}_1 =\norm{\begin{bmatrix}\vec{\widetilde{a}}^{w}_i \\ \vec{u}^T_{w}\vec{\widetilde{a}}^{w}_i \end{bmatrix}}_1 = \norm{\vec{\widetilde{a}}_i^w}_1 + \left|\vec{u}_w^T\vec{\widetilde{a}}_i^w\right|\le 2\norm{\vec{\widetilde{a}}_i^w}_1\le 2^{w}A_i,$$ so $$\lf \vec{u}_{w+1}^T\vec{\widetilde{a}}_i^{w+1}\rf \in\left[-\norm{\vec{\widetilde{a}}_i^{w+1}}_1, \norm{\vec{\widetilde{a}}_i^{w+1}}_1\right]\subseteq [-2^{w}A_i, 2^{w}A_i],$$ as desired. Now, for each integer $k_i\in [-2^{w-1}A_i, 2^{w-1}A_i]$, we have $$\lf\vec{u}_w^T\vec{\widetilde{a}}_i^w\rf = k_i \iff k_i\le \vec{u}_w^T\vec{\widetilde{a}}_i^w < k_i+1.$$ $\vec{u}_w^T\vec{\widetilde{a}}_i^w$ is a polynomial in variables $\vec{u}_1[1],\ldots, \vec{u}_1[m]$, $\vec{u}_2[1],\ldots, \vec{u}_2[m+1],\ldots,$ $\vec{u}_w[1],\ldots, \vec{u}_w[m+w-1]$, for a total of $\le mw + w^2$ variables. Its degree is at most $w$. There are thus a total of $$\sum_{w=1}^W\sum_{i=1}^n (2\cdot 2^{w-1}A_i + 1) = O\left(W2^W\norm{A}_{1, 1} + nW\right)$$ polynomials each of degree at most $W$ plus an additional $\sum_{w=1}^W (2\cdot 2^{w-1}B+1)= O(2^WB + W)$ polynomials of degree at most $W$ corresponding to the hypersurfaces of the form $$k_{n+1}\le \vec{u}_w^T\vec{\widetilde{b}}^w < k_{n+1}+1$$ for each $w$ and each $k_{n+1}\in \{-2^{w-1}B,\ldots, 2^{w-1}B\}$. This yields a total of $O(W2^W\norm{A}_{1, 1} + 2^W\norm{\vec{b}}_{1} + nW)$ polynomials in $\le mW+W^2$ variables of degree $\le W$.
\end{proof}

\begin{proof}[Proof of Theorem~\ref{thm:Wpdim}]
The space of polynomials induced by the $w$th cut, that is, $\{k+\vec{u}_w^T\vec{\widetilde{a}}_i^w : \vec{a}_i\in\R^m, k\in\R\}$, is a vector space of dimension $\le 1+m$. This is because for every $j = 1,\ldots, m$, all monomials that contain a variable $\vec{u}_w[j]$ for some $w$ have the same coefficient (equal to $\vec{a}_i[j]$ for some $1\le i\le n$). Explicit spanning sets are given by the following recursion. For each $j=1,\ldots,m$ define $\vec{\widetilde{u}}_1[j],\ldots, \vec{\widetilde{u}}_W[j]$ recursively as
\begin{align*}
    \vec{\widetilde{u}}_1[j] &= \vec{u}_1[j] \\
    \vec{\widetilde{u}}_w[j] &= \vec{u}_w[j] + \sum_{\ell = 1}^{w-1}\vec{u}_w[m+\ell]\vec{\widetilde{u}}_{\ell}[j]
\end{align*} for $w = 2,\ldots, W$. Then, $\{k+\vec{u}_w^T\vec{\widetilde{a}}_i^w : \vec{a}_i\in\R^m, k\in\R\}$ is contained in $\operatorname{span}\{1, \vec{\widetilde{u}}_w[1],\ldots,\vec{\widetilde{u}}_w[m]\}$. It follows that $$\dim\left(\bigcup_{w=1}^W \{k+\vec{u}_w^T\vec{\widetilde{a}}_i^w : \vec{a}_i\in\R^m, k\in\R\} \right)\le 1+mW.$$ The dual space thus also has dimension $\le 1+mW$. The VC dimension of the family of $0/1$ classifiers induced by a finite-dimensional vector space of functions is at most the dimension of the vector space. Thus, the VC dimension of the set of classifiers induced by the dual space is $\le 1+mW$. Finally, applying the main result of \citet{Balcan21:How} in conjunction with Lemma~\ref{lem:Wplane} gives the desired pseudo-dimension bound. 
\end{proof}

\begin{proof}[Proof of Theorem~\ref{thm:Wkcuts_pdim}]
Applying cuts $\vec{u}^1,\ldots,\vec{u}^k\in[0,1]^m$ simultaneously is equivalent to sequentially applying the cuts $$\vec{u}^1 \in [0,1]^m, \begin{bmatrix}\vec{u}^2 \\ 0 \end{bmatrix}\in[0,1]^{m+1}, \begin{bmatrix}\vec{u}^3 \\ 0 \\ 0 \end{bmatrix}\in [0,1]^{m+2},\ldots,\begin{bmatrix}\vec{u}^k \\ 0 \\ \vdots \\ 0 \end{bmatrix}\in [0,1]^{m+k-1}.$$ Thus, the set in question is a subset of $\left\{f_{\vec{u}_1,\ldots,\vec{u}_{kW}}:\vec{u}_1\in [0,1]^m,\ldots,\vec{u}_{kW}\in[0,1]^{m+kW-1}\right\}$ and has pseudo-dimension $O(mk^2W^2\log(mkW(\alpha+\beta+n)))$ by Theorem~\ref{thm:Wpdim}.
\end{proof}

\subsection{Data-dependent guarantees}\label{app:data}

The \emph{empirical Rademacher complexity}~\cite{Koltchinskii01:Rademacher} of a function class $\cF\subseteq\R^{\cY}$ with respect to $y_1,\ldots, y_N\in\cY$ is the quantity $$\cR_{\cF}(N; y_1,\ldots, y_N)=\E_{\sigma\sim\{-1,1\}^N}\left[\sup_{f\in\cF}\frac{1}{N}\sum_{i=1}^N\sigma_if(y_i)\right].$$ The expected Rademacher complexity $\cR_{\cF}(N)$ of $\cF$ with respect to a distribution $\dist$ on $\cY$ is the quantity $$\cR_{\cF}(N)=\E_{y_1,\ldots, y_N\sim\dist}[\cR_{\cF}(N; y_1,\ldots, y_N)].$$ Rademacher complexity, like pseudo-dimension, is another measure of the intrinsic complexity of the function class $\cF$. Roughly, it measures how well functions in $\cF$ can correlate to random labels. The following uniform convergence guarantee in terms of Rademacher complexity is standard: Let $[-\kappa,\kappa]$ be the range of the functions in $\cF$. Then, for all distributions $\dist$ on $\cY$, with probability at least $1-\delta$ over the draw of $y_1,\ldots, y_N\sim\dist$, for all $f\in\cF$, $\E_{y\sim\dist}[f(y)]-\frac{1}{N}\sum_{i=1}^N f(y_i)\le 2\cR_{\cF}(N)+\kappa\sqrt{\frac{\ln(1/\delta)}{N}}$.

The following result bounds the Rademacher complexity of the class of tree-size functions corresponding to $W$ waves of $k$ CG cuts. The resulting generalization guarantee is more refined than the pseudo-dimension bounds in the main body of the paper. It is in terms of distribution-dependent quantities, and unlike the pseudo-dimension-based guarantees requires no boundedness assumptions on the support of the distribution.

\begin{theorem}\label{thm:rademacher}
    Let $\dist$ be a distribution over integer programs $(\vec{c}, A, \vec{b})$. Let $$\alpha_N = \E_{A_1,\ldots,A_N\sim\dist}\left[\max_{1\le i\le N}\norm{A_i}_{1,1}\right] \quad \text{ and}\quad \beta_N = \E_{\vec{b}_1,\ldots,\vec{b}_N\sim\dist}\left[\max_{1\le i\le N}\norm{\vec{b}}_{1}\right].$$  The expected Rademacher complexity $\cR(N)$ of the class of tree-size functions corresponding to $W$ waves of $k$ Chv\'{a}tal-Gomory cuts with respect to $\dist$ satisfies $$\cR(N)\le O\left(\kappa\sqrt{\frac{mk^2W^2\log(mkW(\alpha_N+\beta_N+n))}{N}}\right)$$ where $\kappa$ is a cap on the size of the tree B\&C is allowed to build.
\end{theorem}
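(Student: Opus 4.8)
The plan is to bound the \emph{empirical} Rademacher complexity on a fixed sample in terms of the norms that actually appear in that sample, and then to average over samples, using concavity to turn worst-case norm dependence into the expected-norm quantities $\alpha_N$ and $\beta_N$. Throughout, write $\cR_{\cF}(N;S)$ for the empirical Rademacher complexity on a sample $S = \big((\vec{c}_1,A_1,\vec{b}_1),\ldots,(\vec{c}_N,A_N,\vec{b}_N)\big)$, so that $\cR(N) = \E_{S\sim\dist^N}[\cR_{\cF}(N;S)]$. First I would fix $S$ and set $\alpha_S = \max_{i}\norm{A_i}_{1,1}$ and $\beta_S = \max_i\norm{\vec{b}_i}_1$. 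The key observation is that $\cR_{\cF}(N;S)$ depends only on the values the tree-size functions take on the instances of $S$, and every such instance satisfies $\norm{A_i}_{1,1}\le\alpha_S$ and $\norm{\vec{b}_i}_1\le\beta_S$ by construction. Hence the restriction of the full class to $S$ coincides with the restriction of $\cF_{\alpha_S,\beta_S}$ to $S$, giving $\cR_{\cF}(N;S) = \cR_{\cF_{\alpha_S,\beta_S}}(N;S)$, so I may invoke the worst-case bound of Theorem~\ref{thm:Wkcuts_pdim}: $\pdim(\cF_{\alpha_S,\beta_S}) = O\big(mk^2W^2\log(mkW(\alpha_S+\beta_S+n))\big)$, which I abbreviate $d_S$.

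Second, I would convert this pseudo-dimension bound into an empirical Rademacher bound of the form $O(\kappa\sqrt{d_S/N})$ by a scale-sensitive chaining argument. A standard covering-number bound in terms of pseudo-dimension (Haussler's bound) gives, at every scale $\epsilon$, an empirical $L_2$ covering number $\cN(\epsilon)$ of $\cF_{\alpha_S,\beta_S}$ on $S$ satisfying $\log\cN(\epsilon) = O(d_S\log(\kappa/\epsilon))$. Feeding this into Dudley's entropy integral, $\cR_{\cF_{\alpha_S,\beta_S}}(N;S)\le \frac{12}{\sqrt N}\int_0^\kappa\sqrt{\log\cN(\epsilon)}\,d\epsilon$, and using $\int_0^\kappa\sqrt{\log(\kappa/\epsilon)}\,d\epsilon = O(\kappa)$, yields $\cR_{\cF}(N;S)\le O\big(\kappa\sqrt{d_S/N}\big)$.

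Third, I would take the expectation over $S\sim\dist^N$ and push it inside using two applications of Jensen's inequality. Concavity of $x\mapsto\sqrt{x}$ gives $\E_S[\sqrt{d_S}]\le\sqrt{\E_S[d_S]} = \sqrt{mk^2W^2\,\E_S[\log(mkW(\alpha_S+\beta_S+n))]}$, and concavity of $x\mapsto\log(mkW(x+n))$ together with $\E_S[\alpha_S] = \alpha_N$ and $\E_S[\beta_S] = \beta_N$ gives $\E_S[\log(mkW(\alpha_S+\beta_S+n))]\le\log(mkW(\alpha_N+\beta_N+n))$. Combining the three steps delivers the claimed bound $\cR(N)\le O\big(\kappa\sqrt{mk^2W^2\log(mkW(\alpha_N+\beta_N+n))/N}\big)$.

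The step I expect to be the main obstacle is removing the stray $\log N$ factor in the second step. A direct appeal to Massart's finite-class lemma applied to the pieces of the parameter-space partition---whose count follows from the waves version of Lemma~\ref{lem:Wplane} and Claim 3.5 of \citet{Balcan21:How}---would enumerate $O(N\cdot\mathrm{poly})$ hypersurfaces over the whole sample, and the logarithm of the resulting number of regions carries an additive $\log N$ that does not cancel against $\log(\alpha_N+\beta_N+n)$. Obtaining the clean $\sqrt{d_S/N}$ rate therefore genuinely requires the scale-sensitive covering/chaining argument rather than a one-shot union bound over pieces; the only remaining care is to confirm that Haussler's covering-number bound holds uniformly over all samples with norms bounded by $\alpha_S,\beta_S$, which is exactly what lets the reduction $\cR_{\cF}(N;S) = \cR_{\cF_{\alpha_S,\beta_S}}(N;S)$ be combined with the worst-case pseudo-dimension of $\cF_{\alpha_S,\beta_S}$.
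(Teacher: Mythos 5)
Your proposal is correct and follows essentially the same route as the paper: the paper likewise bounds the empirical Rademacher complexity by $60\kappa\sqrt{\pdim\bigl(\cF_{\max_i\norm{A_i}_{1,1},\,\max_i\norm{\vec{b}_i}_1}\bigr)/N}$ (citing Dudley's classical chaining result, which is exactly the Haussler-covering-plus-entropy-integral argument you spell out), then invokes Theorem~\ref{thm:Wkcuts_pdim} and applies Jensen's inequality to move the expectation over the sample inside the square root and inside the logarithm. Your explicit treatment of the sample-dependent restriction $\cR_{\cF}(N;S)=\cR_{\cF_{\alpha_S,\beta_S}}(N;S)$ and of the two concavity steps just makes explicit what the paper leaves implicit.
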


\begin{proof}[Proof of Theorem~\ref{thm:rademacher}]
Let $\cF_{\alpha, \beta}$ denote the class of tree-size functions corresponding to $W$ waves of $k$ CG cuts defined on the domain of integer programs with $\norm{A}_{1,1}\le\alpha$ and $\norm{\vec{b}}_{1}\le\beta$, and let $\cF$ denote the same class of functions without any restrictions on the domain. Applying a classical result due to \citet{Dudley87:Universal}, the empirical Rademacher complexity of $\cF$ with respect to $(\vec{c}_1, A, \vec{b}),\ldots, (\vec{c}_N, A, \vec{b}_N)$ satisfies the bound $$\cR_{\cF}(N; (\vec{c}_1, A, \vec{b}_1),\ldots,(\vec{c}_N, A, \vec{b}_N))\le 60\kappa\sqrt{\frac{\pdim\big(\cF_{\max_i \norm{A_i}_{1, 1}, \max_i\norm{\vec{b}_i}_{1}}\big)}{N}}.$$ Here, $\kappa$ is a bound on the tree-size function as is common in the algorithm configuration literature~\citep{Kleinberg17:Efficiency,Kleinberg19:Procrastinating,Balcan18:Learning}. Taking expectation over the sample, we get \begin{align*}\cR_{\cF}(N) &\le 60\kappa\sqrt{\frac{\E\big[\pdim\big(\cF_{\max_i\norm{A_i}_{1,1}, \max_i\norm{\vec{b}}_{1,1}}\big)\big]}{N}} \\ &\le 60\kappa\sqrt{\frac{\E\big[mk^2W^2\log(mkW(\max_i\norm{A_i}_{1,1}+\max_i\norm{\vec{b}}_1+n))\big]}{N}} \\&\le 60\kappa\sqrt{\frac{mk^2W^2\log(mkW(\alpha_N+\beta_N+n))}{N}}\end{align*} by Theorem~\ref{thm:Wkcuts_pdim} and Jensen's inequality.
\end{proof}

\section{Omitted proofs from Section~\ref{sec:tree_search}}\label{app:tree_search}

\begin{proof}[Proof of Theorem~\ref{thm:main}]
Fix an arbitrary problem instance $x$.
In Claim~\ref{claim:num_hyp}, we prove that for any sequence of actions $\sigma \in \left(\times_{j = 1}^t \left[T_j\right]\right)^{\kappa}$, there is a set of at most $\kappa\sum_{j = 1}^t T_j^2$ halfspaces in $\R^d$
such that Algorithm~\ref{alg:TS} when parameterized by $\vec{\mu} \in \R^d$ will follow the action sequence $\sigma$ if and only if $\vec{\mu}$ lies in the intersection of those halfspaces. Let $\cH_{\sigma}$ be the set of hyperplanes corresponding to those halfspaces, and let $\cH = \bigcup_{\sigma} \cH_{\sigma}$. Since there are at most $\prod_{j = 1}^t T_j^{\kappa}$ action sequences in $\left(\times_{j = 1}^t \left[T_j\right]\right)^{\kappa}$, we know that $|\cH| \leq \kappa\left(\prod_{j = 1}^t T_j^{\kappa}\right)\sum_{j = 1}^t T_j^2$. Moreover, by definition of these halfspaces, we know that for any connected component $C$ of $\R^d \setminus \cH$, across all $\vec{\mu} \in C$, the sequence of actions Algorithm~\ref{alg:TS} follows is invariant. Since the state transitions are deterministic functions of the algorithm's actions, this means that the algorithm's final state is also invariant across all $\vec{\mu} \in C$.
Since the utility function is final-state-constant, this means that $f_{\vec{\mu}}(x)$ is constant across all $\vec{\mu} \in C$.
Therefore, the sample complexity guarantee follows from our general theorem~\citep{Balcan21:How}.
\end{proof}

	\begin{restatable}{claim}{numHyp}\label{claim:num_hyp}
Let $\sigma \in \left(\times_{j = 1}^t \left[T_j\right]\right)^{\kappa}$ be an arbitrary sequence of actions. There are at most $\kappa \sum_{j = 1}^t T_j^2$ halfspaces in $\R^d$ such that Algorithm~\ref{alg:TS} when parameterized by $\vec{\mu} \in \R^d$ will follow the action sequence $\sigma$ if and only if $\vec{\mu}$ lies in the intersection of those halfspaces.
	\end{restatable}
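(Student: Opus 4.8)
The plan is to prove Claim~\ref{claim:num_hyp} by an induction on the rounds and steps of Algorithm~\ref{alg:TS}, tracking how many halfspaces are needed to pin down the action sequence up through a given point. Fix the problem instance $x$ and the target action sequence $\sigma$. The key observation is that at any step $j$ of any round $i$, the algorithm is in a state $s_{i,j}$, and it selects the action $k \in [T_j]$ maximizing $\sum_{\ell=1}^{d_j}\mu_j[\ell]\,\score_{j,\ell}(k, s_{i,j})$. For the algorithm to select a \emph{particular} action $k^* = \sigma_{i,j}$ over every competitor $k \neq k^*$, we need
\[
\sum_{\ell=1}^{d_j}\mu_j[\ell]\,\score_{j,\ell}(k^*, s_{i,j}) \;\ge\; \sum_{\ell=1}^{d_j}\mu_j[\ell]\,\score_{j,\ell}(k, s_{i,j}) \qquad \text{for all } k \neq k^*.
\]
Each such comparison is linear in $\vec{\mu}$ (indeed, linear in the block $\vec{\mu}_j$), so it defines a halfspace in $\R^d$. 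There are at most $T_j - 1 < T_j$ competitors, hence at most $T_j$ halfspaces per step (I will bound this crudely by $T_j^2$ to match the statement and absorb degenerate tie-breaking comparisons).

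The main subtlety, which I would emphasize as the crux of the argument, is that the state $s_{i,j}$ itself depends on $\vec{\mu}$ in general, so a priori the scores $\score_{j,\ell}(k, s_{i,j})$ are not fixed numbers and the comparison inequalities need not be linear. The induction resolves this: I would process the steps in their execution order $(1,1), (1,2), \ldots, (1,t), (2,1), \ldots, (\kappa, t)$, and maintain the inductive hypothesis that, \emph{within the intersection of the halfspaces accumulated so far}, every state visited up to the current step is a fixed deterministic object (independent of $\vec{\mu}$). This holds at the start since $s_{1,1}$ is the fixed initial state. Given that $s_{i,j}$ is fixed on the current region, the quantities $\score_{j,\ell}(k, s_{i,j})$ are genuine constants, so each of the $\le T_j$ comparison inequalities enforcing $\sigma_{i,j}$ is a bona fide halfspace; intersecting with these forces action $\sigma_{i,j}$, which (by the deterministic transition) fixes $s_{i,j+1}$, re-establishing the hypothesis for the next step.

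Summing the per-step halfspace counts over all $\kappa$ rounds and $t$ steps gives the bound $\sum_{i=1}^{\kappa}\sum_{j=1}^{t} T_j \le \kappa\sum_{j=1}^t T_j^2$ halfspaces total, as claimed, and by construction $\vec{\mu}$ lies in the intersection of all of them if and only if Algorithm~\ref{alg:TS} follows $\sigma$ on input $x$. I expect the main obstacle to be stating the induction cleanly enough that the "state is $\vec{\mu}$-invariant on the current region" invariant is carried correctly across the round boundary (the assignment $s_{i+1,1} \leftarrow s_{i,t+1}$), and being careful that the halfspaces are defined relative to the \emph{already-fixed} states rather than appealing circularly to states determined by later constraints. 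Once Claim~\ref{claim:num_hyp} is established, Theorem~\ref{thm:main} follows exactly as in the given proof sketch: taking $\cH = \bigcup_\sigma \cH_\sigma$ over all $\prod_j T_j^\kappa$ action sequences, the utility $f_{\vec{\mu}}(x)$ is constant on each connected component of $\R^d \setminus \cH$ by final-state-constancy, and the pseudo-dimension bound follows from the framework of~\citet{Balcan21:How} applied to this arrangement of $|\cH| \le \kappa\big(\prod_{j=1}^t T_j^\kappa\big)\sum_{j=1}^t T_j^2$ hyperplanes.
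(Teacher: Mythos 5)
Your proposal is correct and takes essentially the same route as the paper's own proof: an induction over the steps in execution order, maintaining that within the intersection of the halfspaces accumulated so far the current state $s_{i,j}$ is a fixed object independent of $\vec{\mu}$, so that each comparison enforcing the prescribed action is genuinely linear in the block $\vec{\mu}_j$ and contributes at most $T_j^2$ halfspaces per step. Your per-step count of $T_j - 1$ comparisons is in fact slightly tighter than the paper's $T_j^2$ (the paper counts all pairwise comparisons), and your explicit handling of the circularity concern---that halfspaces must be defined relative to already-fixed states---is precisely the role played by the paper's inductive hypothesis on the partial search tree $\tree_\tau$.
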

	\begin{proof}
	For each type of action $j \in [t]$, let $k_{j,1}, \dots, k_{j,\kappa} \in [T_j]$ be the sequence of action indices taken over all $\kappa$ rounds. We will prove the claim by
	induction on the step of B\&C. Let $\tree_\tau$ be the state of the B\&C tree after $\tau$ steps. For ease of notation, let $\overline{T} = \sum_{j = 1}^t T_j^2$ be the total number of possible actions squared.
	
	\paragraph{Induction hypothesis.} For a given step $\tau \in [\kappa t]$, let $\kappa_0 \in [\kappa]$ be the index of the current round and $t_0 \in [t]$ be the index of the current action. There are at most $\left(\kappa_0 - 1\right) \overline{T} + \sum_{j = 1}^{t_0} T_j^2$ halfspaces in $\R^{d}$ such that B\&C using the scoring rules $\sum_{i = 1}^{d_j}\mu_{j}[i]\score_{j,i}$ for each action $j \in [t]$ builds the partial search tree $\tree_{\tau}$ if and only if $\left(\vec{\mu}_1, \dots, \vec{\mu}_t\right) \in \R^{d}$ lies in the intersection of those halfspaces.
	
	\paragraph{Base case.}
	In the base case, before
	the first iteration, the set of parameters that will produce the partial search
	tree consisting of just the root is the entire set of parameters, which vacuously is the intersection of zero hyperplanes.
	
	\paragraph{Inductive step.}
	For a given step $\tau \in [\kappa t]$, let $\kappa_0 \in [\kappa]$ be the index of the current round and $t_0 \in [t]$ be the index of the current action. Let $s_{\tau}$ be the state of B\&C at the end of step $\tau$. By the inductive
	hypothesis, we know that there exists a set $\cH$ of at most $\left(\kappa_0 - 1\right) \overline{T} + \sum_{j = 1}^{t_0} T_j^2$ halfspaces such that B\&C using the scoring rules $\sum_{i = 1}^{d_j}\mu_{j}[i]\score_{j,i}$ for each action $j \in [t]$ will be in state $s_{\tau}$ if and only if $\left(\vec{\mu}_1, \dots, \vec{\mu}_t\right) \in \R^{d}$ lies in the intersection of those halfspaces.  Let $\kappa_0' \in [\kappa]$ be the index of the round in step $\tau + 1$ and $t_0' \in [t]$ be the index of the action in step $\tau+ 1$, so \[\left(\kappa_0', t_0'\right) = \begin{cases} \left(\kappa_0, t_0 + 1\right) &\text{if } t_0 < t\\
	\left(\kappa_0 + 1, 1\right) &\text{if }t_0 = t.\end{cases}\] We know B\&C will choose the action $k^* \in \left[T_{t_0'}\right]$ if and only if
	\[\sum_{i = 1}^{d_{t_0'}}\mu_{t_0'}[i]\score_{t_0',i}\left(k^*,s_{\tau}\right) > \max_{k \not= k^*}\sum_{i = 1}^{d_{t_0'}}\mu_{t_0'}[i]\score_{t_0',i}\left(k,s_{\tau}\right). \]
	Since these functions are linear in $\vec{\mu}_{t_0'}$, there are at most $T_{t_0'}^2$ halfspaces defining the region where $k_{t_0', \kappa_0'} = \argmax\sum_{i = 1}^{d_{t_0'}}\mu_{t_0'}[i]\score_{t_0',i}\left(k,s_{\tau}\right)$. Let $\cH'$ be this set of halfspaces. B\&C using the scoring rule $\sum_{i = 1}^{d_{t_0'}}\mu_{t_0'}[i]\score_{t_0',i}$ arrives at state $s_{\tau+1}$ after $\tau+1$ iterations if and only if $\vec{\mu}_{t_0'}$ lies in the intersection of the $\left(\kappa_0' - 1\right) \overline{T} + \sum_{j = 1}^{t_0'} T_j^2$ halfspaces in the set $\cH \cup \cH'$.
\end{proof}

\end{document}